\documentclass{article}

\PassOptionsToPackage{numbers, compress}{natbib}

\usepackage[main,final]{neurips_2026}
\setcitestyle{square,numbers,comma}

\usepackage[utf8]{inputenc} %
\usepackage[T1]{fontenc}    %
\usepackage{hyperref}       %
\usepackage{url}            %
\usepackage{booktabs}       %
\usepackage{multirow}       %
\usepackage{amsmath}        %
\usepackage{amsfonts}       %
\usepackage{amsthm}         %
\usepackage{nicefrac}       %
\usepackage{microtype}      %
\usepackage[table]{xcolor}  %
\usepackage{titletoc}       %
\usepackage{graphicx}       %
\usepackage{subcaption}     %
\usepackage{wrapfig}        %
\usepackage{algorithm}      %
\usepackage{algpseudocode}  %

\definecolor{org}{HTML}{F8A145}
\hypersetup{
  colorlinks=true,
  linkcolor=blue,
  urlcolor=org,
  citecolor=org,
  linktoc=all
}

\newtheorem{theorem}{Theorem}
\newtheorem{lemma}[theorem]{Lemma}
\newtheorem{remark}[theorem]{Remark}

\newcommand{\Lcls}{\mathcal{L}_{\mathrm{cls}}}

\title{CODI: Improving Self-Supervised Vision Transformers with Cross Distillation}

\author{\parbox{13cm}{\centering
  {\large\bfseries
    Siran Dai$^{1,2}$ \quad Qianqian Xu$^{3,4}$\thanks{Corresponding authors: Qianqian Xu and Qingming Huang.} \quad Peisong Wen$^{5}$ \\
    Yang Liu$^{5}$ \quad Xiaochun Cao$^{6}$ \quad Qingming Huang$^{5,3*}$}\\
  {\normalsize\normalfont
    $^1$ Institute of Information Engineering, Chinese Academy of Sciences \\[-1pt]
    $^2$ School of Cyber Security, University of Chinese Academy of Sciences \\[-1pt]
    $^3$ State Key Laboratory of AI Safety, Institute of Computing Technology, CAS \\[-1pt]
    $^4$ Beijing Academy of Artificial Intelligence \\[-1pt]
    $^5$ School of Computer Science and Tech., University of Chinese Academy of Sciences \\[-1pt]
    $^6$ School of Cyber Science and Technology, Shenzhen Campus, Sun Yat-sen University \\[-1pt]
  }
  {\normalfont\ttfamily\small daisiran@iie.ac.cn \enspace xuqianqian@ict.ac.cn \enspace wenpeisong@ucas.ac.cn \\[-1pt]
    liuyang232@mails.ucas.ac.cn \enspace caoxiaochun@mail.sysu.edu.cn \enspace qmhuang@ucas.ac.cn}
}}

\begin{document}

\maketitle
\vspace{-1.0em}

\begin{abstract}
  Self-supervised Vision Transformers learn strong image-level representations, but their patch-level representations can be unreliable for dense prediction. We study this mismatch and find that the class token concentrates global semantics on a small number of representative patches, while dense prediction errors often occur on low-attention regions. We provide a local gradient analysis showing that, along the \texttt{[CLS]} attention readout, image-level self-distillation gradients are weighted by \texttt{[CLS]}-to-patch attention mass. To provide masked patches with global semantics without changing the architecture, we propose Cross Distillation (CODI), which mixes a small amount of the teacher class-token distribution into each teacher patch-token target. CODI gives every masked patch an attention-independent semantic correction while preserving the standard self-distillation architecture. Experiments on dense and image-level benchmarks show that CODI improves both dense and image-level performance under aligned pretraining settings. Code is available at \url{https://github.com/EldercatSAM/CODI}.
\end{abstract}
\vspace{-1.2em}

\section{Introduction}

\begin{wrapfigure}[18]{r}{0.45\textwidth}
    \vspace{-1.5em}
    \centering
    \includegraphics[width=\linewidth,height=0.88\linewidth]{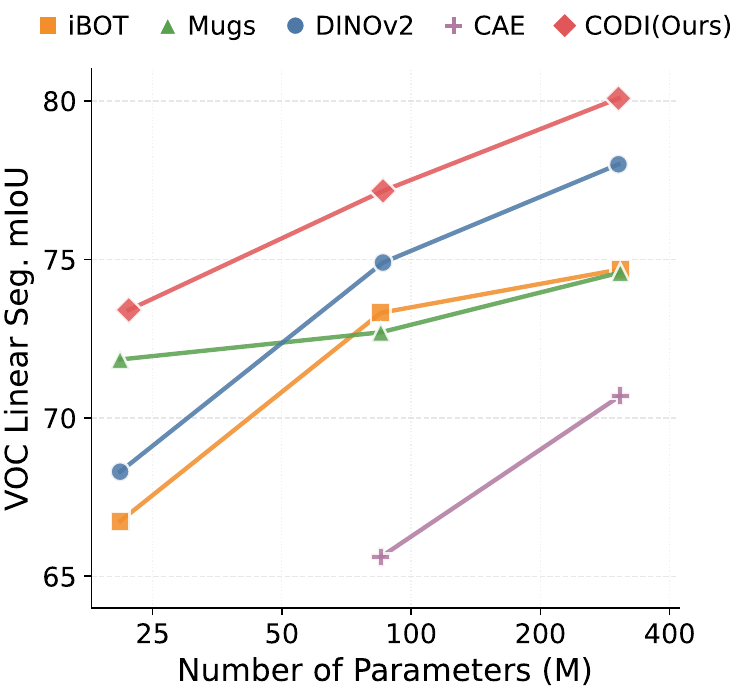}
    \caption{Comparison of CODI with baselines on Pascal VOC segmentation.}
    \label{fig:scaling}
    \vspace{-0.8em}
\end{wrapfigure}

A general-purpose self-supervised Vision Transformer (ViT) should support both image-level recognition and dense prediction. These two goals rely on different representations: image-level recognition mainly uses the class token, whereas dense prediction depends directly on patch tokens. Therefore, ideal approaches should not only learn global semantics in the class token but also pass these semantics to patch tokens. Recent teacher-student self-distillation methods \citep{dino, ibot, dinov2} have achieved great success by training a student with teacher-provided targets under both image-level discriminative and masked patch-level reconstruction losses. Despite this success, recent works \citep{sdd, dinov3} observe that this semantic delivery is imperfect: during DINOv2-style pretraining, dense transfer performance can degrade in the later stage, indicating that patch-level representations do not always improve together with global semantic tokens.

Since image-level self-distillation is optimized through the \texttt{[CLS]} token, we ask: \textit{why does a stronger \texttt{[CLS]} representation not automatically lead to stronger patch-level representations?} To diagnose this mismatch, we examine how the class-token readout couples to patch tokens during DINOv2-style pretraining. As shown in Fig.~\ref{fig:motivation}, the \texttt{[CLS]}-to-patch attention mass decreases in the later stage of training and follows a trend similar to dense transfer performance. The attention pattern is also highly selective: the \texttt{[CLS]} token repeatedly focuses on a small number of visually discriminative regions, such as the eyes or mouths of animals. We call these high-\texttt{[CLS]}-attention regions representative patches, meaning that they serve as representative evidence for the image-level readout.

This selectivity is natural for image-level recognition, where a few discriminative patches can be sufficient for a confident prediction. Dense prediction, however, evaluates patch tokens at every spatial location. A patch that is not selected by the global readout can still be important for segmentation or depth estimation. Therefore, the relevant issue is not whether the model attends to an entire object, but whether semantic information is reliably available beyond the sparse evidence used by the \texttt{[CLS]} token. Consistent with this view, we observe that downstream errors are concentrated on low-attention, non-representative patches.

Why does self-distillation produce such an uneven allocation? We identify an attention-gated semantic path that helps explain this behavior. Along the \texttt{[CLS]} readout in ViT, image-level self-distillation sends semantic gradients to patch tokens in proportion to their \texttt{[CLS]}-to-patch attention mass. Through this path, high-attention representative patches receive stronger semantic updates, while low-attention patches receive weaker updates. This bias is well aligned with image-level recognition, but can leave many patch tokens less reliable for dense prediction.

\begin{figure}[t]
    \centering
    \includegraphics[width=\linewidth]{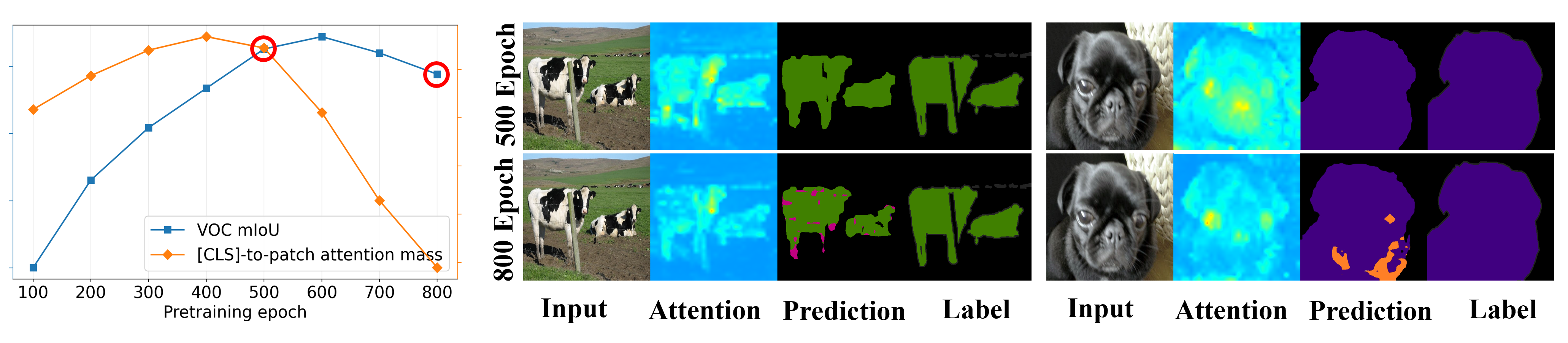}
    \caption{\textbf{Left:} During DINOv2 pretraining, downstream dense performance and \texttt{[CLS]}-to-patch attention mass follow similar trends. \textbf{Right:} The \texttt{[CLS]} token often focuses on representative patches, while prediction errors are more likely to occur on non-representative patches.}
    \label{fig:motivation}
    \vspace{-10pt}
\end{figure}

This analysis suggests that the teacher already contains useful global semantic information, but the standard training objective delivers this information to patch tokens unevenly through the \texttt{[CLS]} attention path. Based on this insight, we propose Cross Distillation (CODI), a target-side modification to masked patch distillation. For each masked patch, CODI mixes a small amount of the teacher class-token distribution into the teacher patch-token target. The student still learns from a primarily local patch target, but each masked patch also receives a weak global semantic correction that is not explicitly gated by its \texttt{[CLS]} attention mass. In this way, CODI improves the semantic reliability of low-attention patches while preserving the spatial specificity of patch supervision.

Empirically, we show that across seven evaluation settings spanning semantic segmentation, depth estimation, video object segmentation, and image classification, CODI \textit{outperforms} its DINOv2 \cite{dinov2} counterpart when the pretraining setting is aligned. As shown in Fig.~\ref{fig:scaling}, CODI's performance scales with model size. These results reveal the potential of CODI as a more efficient training paradigm.

\section{Related Work}

\subsection{Self-supervised Vision Transformers}

Self-supervised Vision Transformers~\citep{vit,pyramid_ViT,dino} mainly follow two paradigms:
\textbf{1) Contrastive methods} use view discrimination or cross-view consistency~\citep{simclrv2,mocov3,dino}.
\textbf{2) Prediction-based methods} predict masked content in pixel~\citep{mae,SimMIM,cae} or latent space~\citep{ijepa,ibot,dinov2}.
Among these methods, iBOT~\citep{ibot} uses an online tokenizer to provide self-distillation targets for both class tokens and masked patch tokens, while DINOv2~\citep{dinov2} scales this framework using stronger training strategies and larger-scale data.

\subsection{Self-supervised Dense Representation Learning}

Despite progress in image-level SSL~\citep{SimCLRv1,MoCov1,byol,swav,barlowtwins}, dense prediction requires spatially meaningful local semantics~\citep{dense_intro_2,xu2020noisy,yun2022patch,wang2022cp,wang2023cut,salehi2023time,seong2023leveraging,niu2024unsupervised,wen2025semanticconcentrationselfsuperviseddense}. Approaches include local feature alignment~\citep{densecl,liu2026staticdynamicexploringselfsupervised}, object-part grouping~\citep{leopart,wen2026slotjepaobjectstructuredpredictioninterfaces}, and masked prediction~\citep{mae,VJEPA2_1,liu2025futurepasttamingtemporal}.

Longer self-supervised training can also degrade dense representations. DINOv3~\citep{dinov3} restores dense structure through Gram anchoring, while DSE~\citep{sdd} provides structural measures for model selection and regularization. CODI instead modifies patch targets during pretraining to address the representative-patch issue without an additional fine-tuning stage. Section~\ref{subsec:gram_anchoring} compares CODI with Gram anchoring under aligned settings.

\section{The Representative Patch Phenomenon}
\label{sec:patch-starvation}

\subsection{Empirical Observation}
\label{subsec:observation}
To understand the root cause of the self-supervised dense degradation (SDD) phenomenon, we first empirically study the failure modes of existing baselines. Specifically, we train DINOv2 \citep{dinov2} on ImageNet-1K \citep{krizhevsky2012imagenet} from scratch for 800 epochs, and evaluate the final checkpoint on Pascal VOC \citep{voc}. We adopt the linear semantic segmentation setting, where the pretrained model is frozen and a linear head is trained to make pixel-level predictions based on patch representations.

Fig.~\ref{fig:motivation} visualizes the \texttt{[CLS]}-to-patch attention maps and the predicted pixel labels. We observe two patterns: 1) Models tend to assign high attention to several key patterns, such as the eyes and mouth of a cat, rather than to the whole object. We refer to these key patterns as \texttt{Representative Patches}; in quantitative analyses, we define them as the top-$r\%$ image patches ranked by the final-layer \texttt{[CLS]}-to-patch attention mass averaged over heads, excluding register tokens. Similar discriminative regions frequently receive high attention across different instances. 2) Prediction errors usually occur on non-representative patches, such as the wheels of a bus.
To determine whether this phenomenon arises from an observation error or a systematic bias, we bin image patches by normalized \texttt{[CLS]}-to-patch attention mass and compute the segmentation error rate within each bin. As shown in Fig.~\ref{fig:representative_patch_analysis}, we observe a monotonic trend between the error rate and the received global attention score. Since dense prediction tasks, such as semantic segmentation, require every patch, not only the representative patches, to be semantically meaningful, this observation motivates us to further study the cause of this phenomenon and address it.

\subsection{Theoretical Justification}
\label{subsec:theory}

The observation above raises a natural question: if DINO/iBOT already includes image-level self-distillation, why do some object patches still become unreliable for dense prediction? Our explanation is that the strong global target is optimized through the \texttt{[CLS]} token. For a patch to benefit from this target through the \texttt{[CLS]} branch, it must first be selected by \texttt{[CLS]} attention. Therefore, the same softmax mechanism that helps the model identify representative evidence can also make the global target weak for low-attention patches.

\begin{figure}[t]
    \centering
    \begin{tabular}{@{}c@{\hspace{0.015\textwidth}}c@{\hspace{0.015\textwidth}}c@{\hspace{0.015\textwidth}}c@{}}
        \begin{minipage}[t]{0.235\textwidth}
            \centering
            \includegraphics[width=\linewidth]{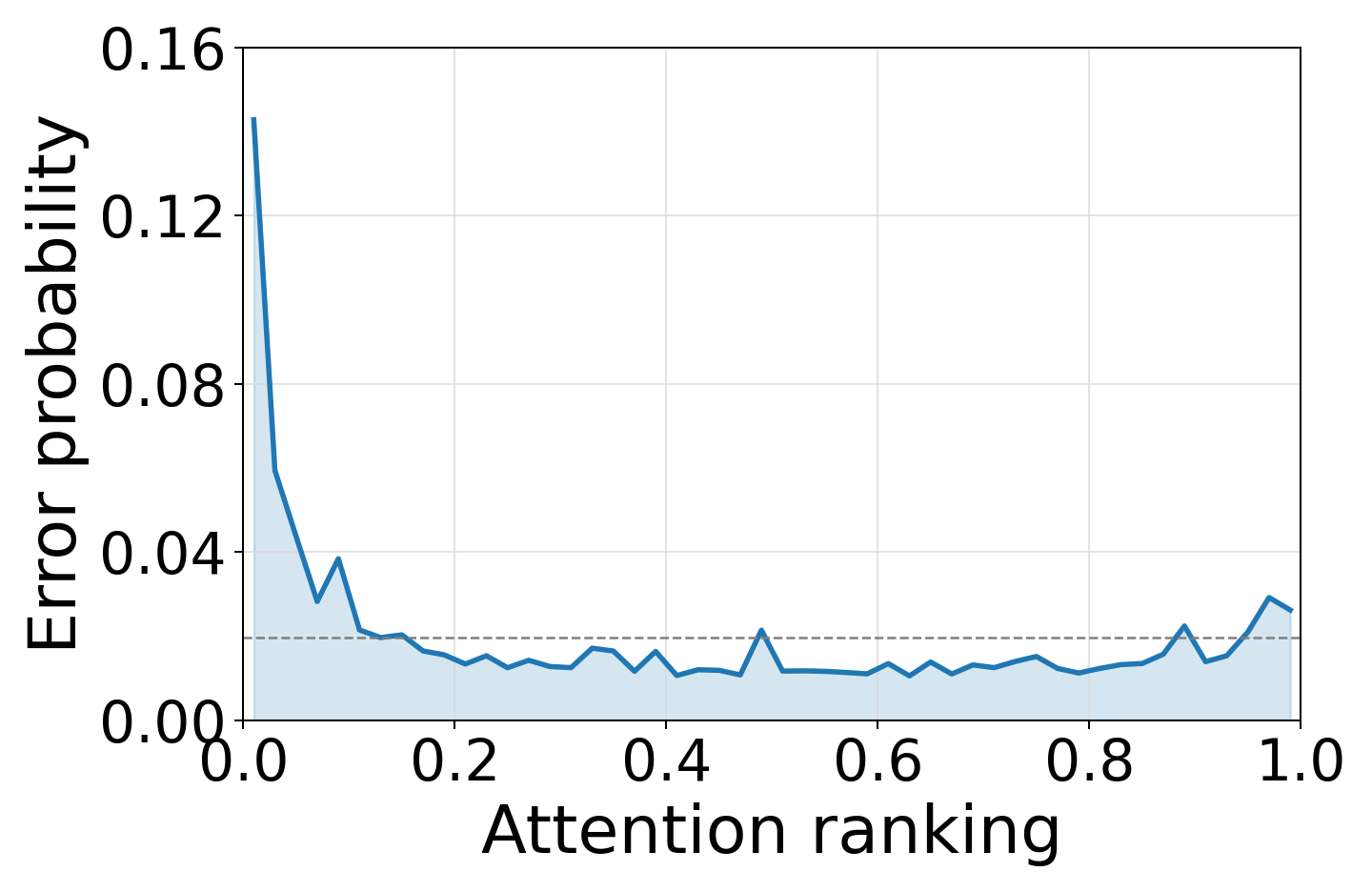}\\[-0.3em]
            {(a) Bird}
        \end{minipage}
        &
        \begin{minipage}[t]{0.235\textwidth}
            \centering
            \includegraphics[width=\linewidth]{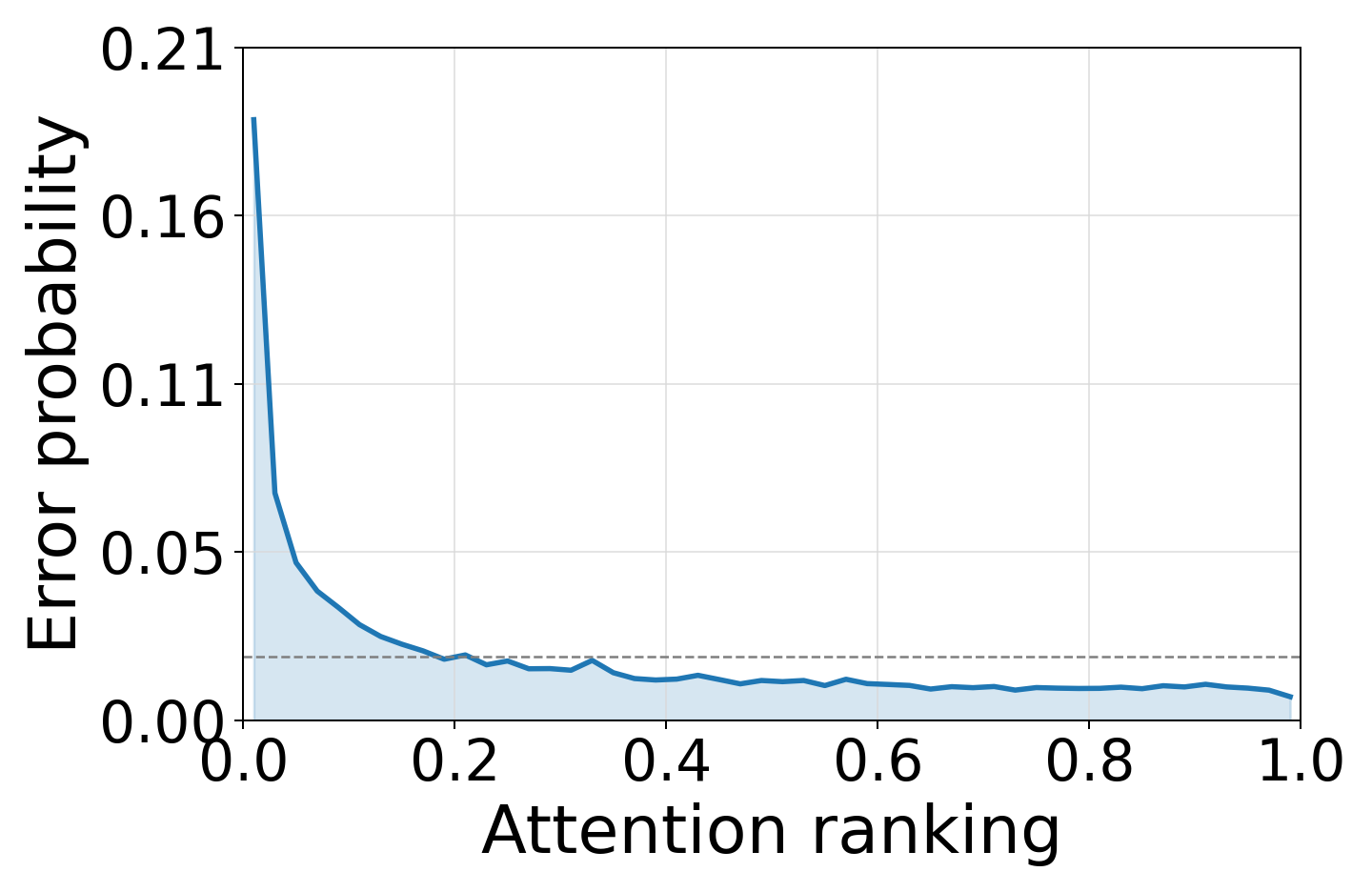}\\[-0.3em]
            {(b) Bus}
        \end{minipage}
        &
        \begin{minipage}[t]{0.235\textwidth}
            \centering
            \includegraphics[width=\linewidth]{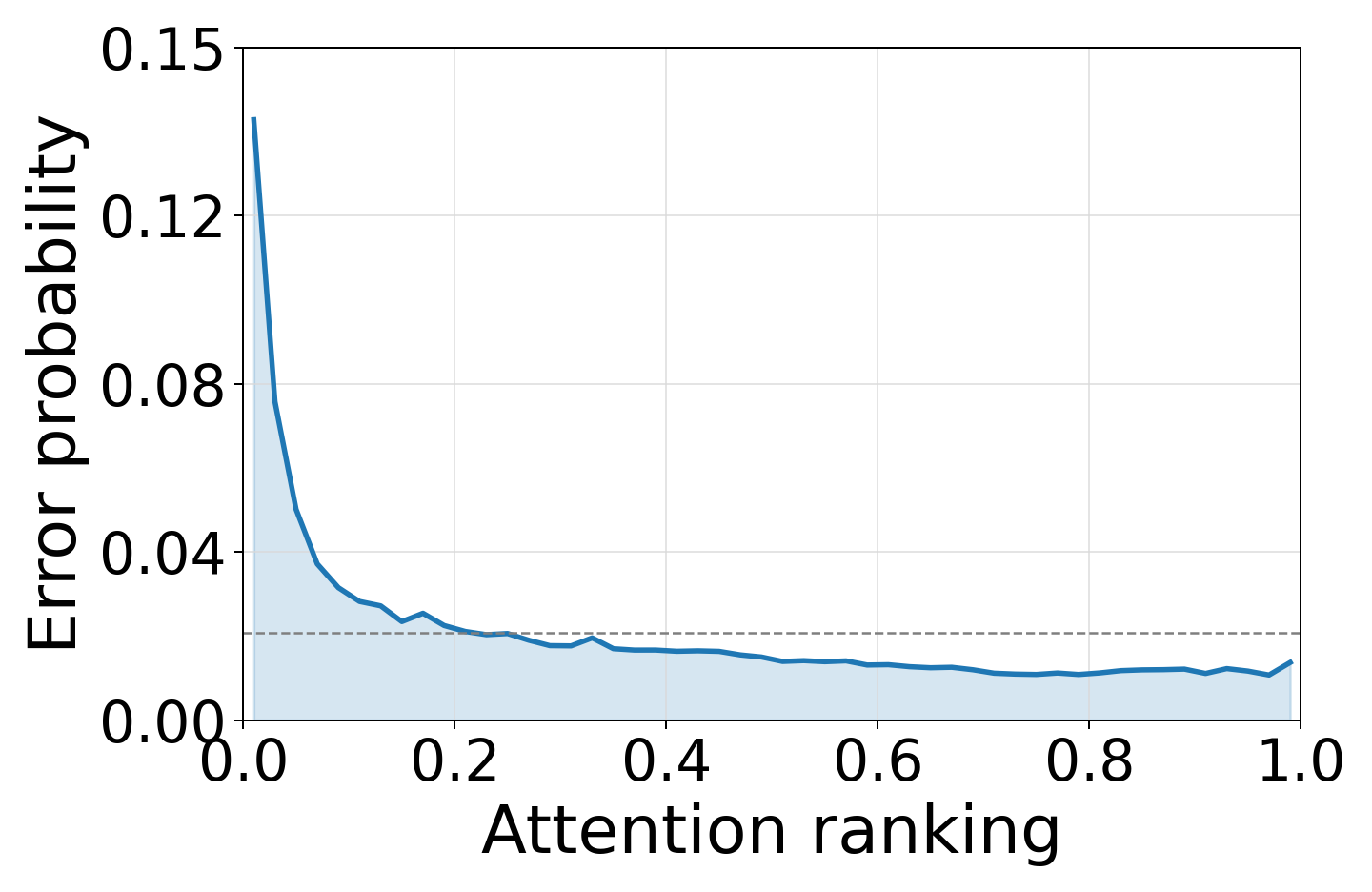}\\[-0.3em]
            {(c) Dog}
        \end{minipage}
        &
        \begin{minipage}[t]{0.235\textwidth}
            \centering
            \includegraphics[width=\linewidth]{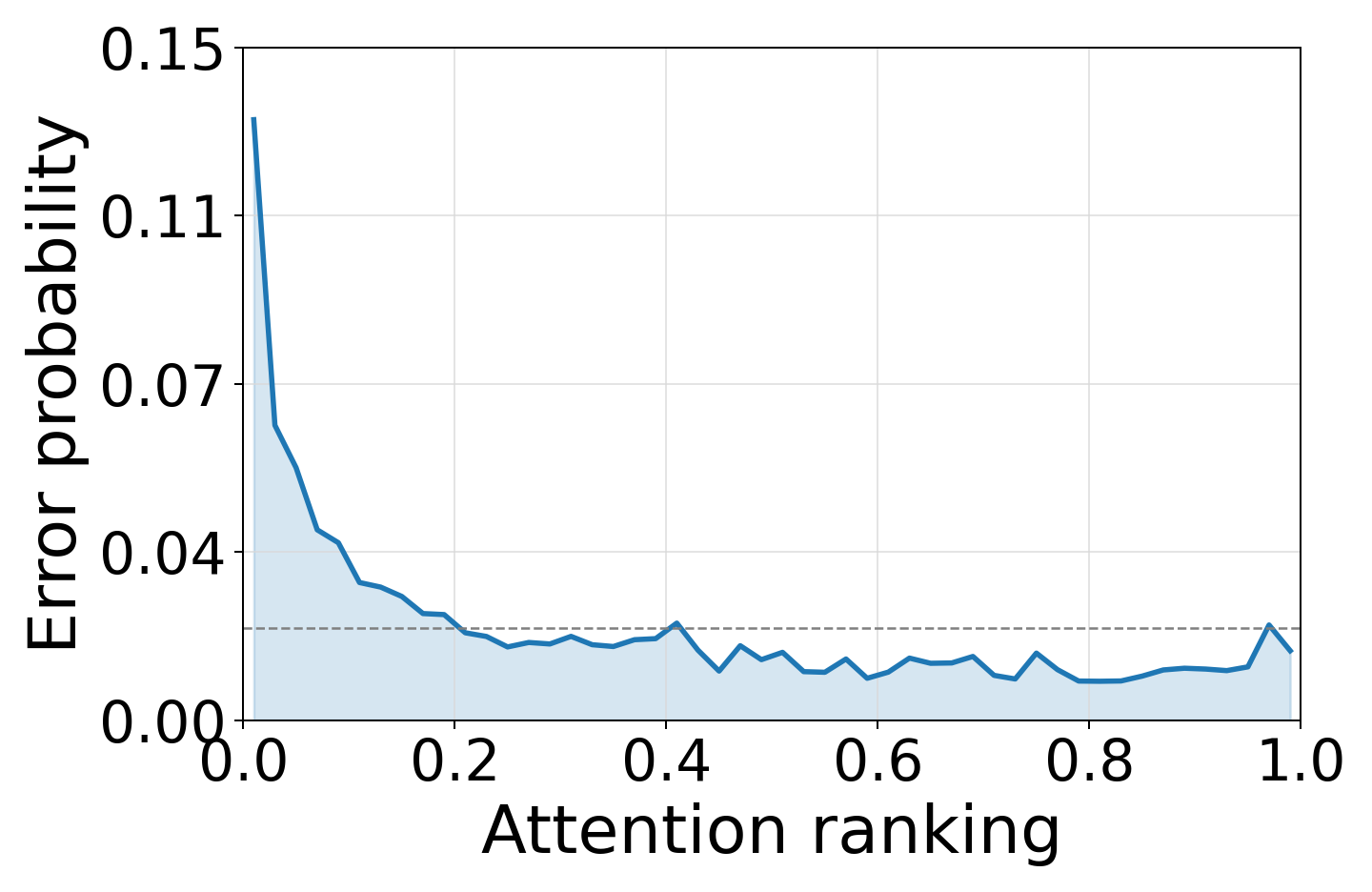}\\[-0.3em]
            {(d) Train}
        \end{minipage}
    \end{tabular}
    \caption{The model tends to make prediction errors on patches with low attention scores.}
    \label{fig:representative_patch_analysis}
\end{figure}

\paragraph{Local \texttt{[CLS]} attention model.}
This subsection analyzes a one-step local \texttt{[CLS]} readout under the image-level class-token loss only. The masked patch loss is not part of this local formulation; it may provide additional patch-token gradients, while our claim concerns the semantic gradient transmitted through the \texttt{[CLS]} attention path. Let \(P\) denote image patch tokens and let \(T\) denote all tokens competing in the final \texttt{[CLS]} attention row, including register tokens when used. For token \(i\in T\), let \(k_i\) and \(v_i\) be its key and value. The local \texttt{[CLS]} readout is
\begin{equation}
    s_i=\frac{a_{\mathrm{cls}}^\top k_i}{\sqrt d},
    \qquad
    \pi_i=\frac{\exp(s_i)}{\sum_{j\in T}\exp(s_j)},
    \qquad
    z=\sum_{j\in T}\pi_j v_j.
    \label{eq:local-readout}
\end{equation}
For patch-level analysis we consider \(i\in P\). We define the total \texttt{[CLS]}-to-patch mass as
\[
    \Pi_P=\sum_{i\in P}\pi_i,
\]
and rank patches within an image by the normalized patch attention \(\bar{\pi}_i=\pi_i/\Pi_P\). This abstraction treats upstream representations and other network paths as fixed, and studies the exact gradient allocation along this softmax-gated attention path.

We treat the teacher class-token target \(q_{\mathrm{cls}}^t\) as fixed in this local calculation, and define
\begin{equation}
    \mathcal{L}_{\mathrm{cls}}
    =
    \mathrm{CE}\left(q_{\mathrm{cls}}^t,\mathrm{softmax}(\phi_s(z)/\tau_s)\right),
    \qquad
    g:=\nabla_z \mathcal{L}_{\mathrm{cls}}.
    \label{eq:dino-loss}
\end{equation}
The construction of \(q_{\mathrm{cls}}^t\) is given in Sec.~\ref{sec:self_distill}.

\begin{theorem}[\texttt{[CLS]} attention gates semantic gradients along the value and score paths]
\label{thm:cls-gating}
For any patch token \(i\in P\), the local readout in Eq.~\eqref{eq:local-readout} sends a value-path gradient and a score update of the form
\begin{equation}
    \nabla_{v_i}\Lcls=\pi_i g,
    \qquad
    -\frac{\partial\Lcls}{\partial s_i}=\pi_i u_i,
    \qquad
    u_i:=-\langle g,v_i-z\rangle,
    \label{eq:attention-gradient-main}
\end{equation}
where \(g\) denotes the global semantic gradient back-propagated from the teacher \texttt{[CLS]} target.
\end{theorem}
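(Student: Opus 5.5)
The plan is a direct chain-rule computation on the local readout in Eq.~\eqref{eq:local-readout}. Throughout, the teacher target \(q_{\mathrm{cls}}^t\) and all upstream quantities are held fixed, so \(\Lcls\) depends on the tokens only through \(z\). It therefore suffices to differentiate \(z=\sum_{j\in T}\pi_j v_j\) with respect to \(v_i\) and with respect to \(s_i\), and then contract with \(g=\nabla_z\Lcls\).

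\emph{Value path.} Here I would note that \(\pi_j\) depends only on the scores \(s\), not on the values. Hence \(\partial z/\partial v_i=\pi_i I\), and the chain rule gives
\[
\nabla_{v_i}\Lcls=\Bigl(\frac{\partial z}{\partial v_i}\Bigr)^{\!\top} g=\pi_i g .
\]
This step is immediate.

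\emph{Score path.} This is the only step with any content: differentiating through the softmax normalization over all of \(T\), including register tokens. I would use the standard softmax Jacobian
\[
\frac{\partial\pi_j}{\partial s_i}=\pi_j(\delta_{ij}-\pi_i),
\]
which follows from differentiating \(\exp(s_j)/\sum_{k\in T}\exp(s_k)\). Substituting it gives
\[
\frac{\partial z}{\partial s_i}=\sum_{j\in T}\pi_j(\delta_{ij}-\pi_i)v_j=\pi_i v_i-\pi_i\sum_{j\in T}\pi_j v_j=\pi_i(v_i-z).
\]
Therefore
\[
\frac{\partial\Lcls}{\partial s_i}=\langle g,\pi_i(v_i-z)\rangle,
\qquad\text{so}\qquad
-\frac{\partial\Lcls}{\partial s_i}=\pi_i u_i,\quad u_i=-\langle g,v_i-z\rangle .
\]
The centering by \(z\) is exactly the trace of the softmax normalization. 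This is where the competition among all tokens in \(T\), registers included, enters. The conclusion holds for every \(i\in P\subseteq T\) without any assumption on the other tokens.

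\emph{Remark on \(g\).} If desired, I would also record the explicit form of \(g\) to identify it with the semantic gradient back-propagated from the teacher target. With \(p_s=\mathrm{softmax}(\phi_s(z)/\tau_s)\), cross-entropy against a fixed target gives
\[
g=\tau_s^{-1}J_{\phi_s}(z)^\top\bigl(p_s-q_{\mathrm{cls}}^t\bigr),
\]
where \(J_{\phi_s}\) is the Jacobian of \(\phi_s\). This is not needed for the stated identities, but it makes clear that \(g\) is common to all patches. Only the prefactor \(\pi_i\) distinguishes them, and this is the gating claim.
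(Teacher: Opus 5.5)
Your proposal is correct and follows the paper's proof step for step. You get the value path from $\partial z/\partial v_i=\pi_i I$ and the score path from the softmax Jacobian, giving $\partial z/\partial s_i=\pi_i(v_i-z)$, then contract with $g$; your explicit formula for $g$ is also correct but, as you say, not needed for the stated identities.
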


\begin{remark}
The scalar \(u_i\) measures the local semantic utility of patch \(i\): if \(u_i>0\), increasing \(s_i\) decreases the loss locally. The value-path gradient norm is exactly \(\|\nabla_{v_i}\mathcal{L}_{\mathrm{cls}}\|=\pi_i\|g\|\), and the score update also contains the multiplicative factor \(\pi_i\). Thus, through this \texttt{[CLS]} readout path, low-attention patches are weakly coupled to the image-level semantic target.
\end{remark}

\begin{wrapfigure}[14]{r}{0.4\textwidth}
    \vspace{-11pt}
    \centering
    \includegraphics[width=\linewidth]{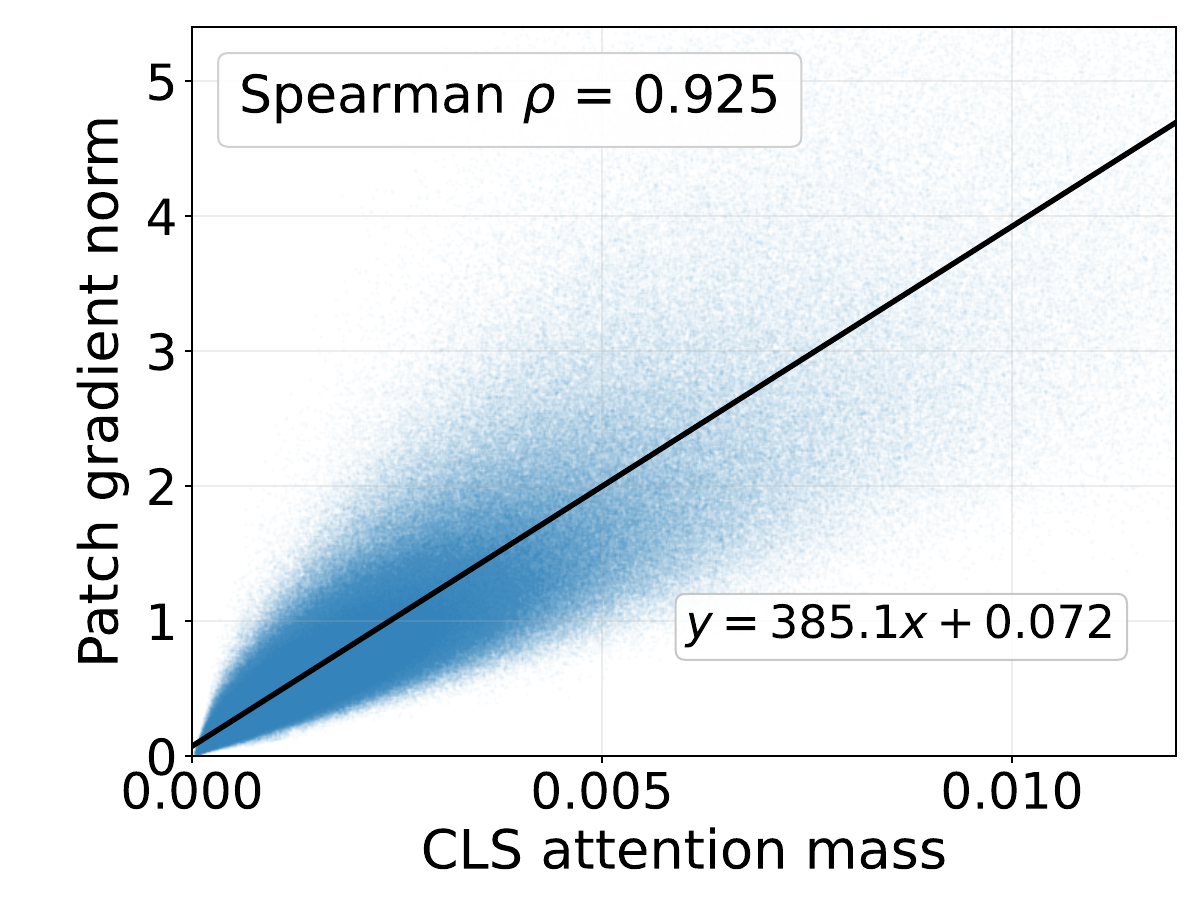}
    \vspace{-1.0em}
    \caption{\texttt{[CLS]} attention mass vs. DINO gradient norm.}
    \label{fig:cls_attn_grad_correlation}
    \vspace{-0.8em}
\end{wrapfigure}

Theorem~\ref{thm:cls-gating} identifies one mechanism by which representative patches can dominate the image-level semantic pathway. Classification benefits from sparse, discriminative evidence; dense prediction instead requires semantic features on many non-representative object regions. Consequently, longer self-distillation training can strengthen the \texttt{[CLS]} representation while leaving low-attention patches with weaker semantic guidance through this path, which matches our observation that segmentation errors concentrate on non-representative patches. Formal statements showing attention-mass amplification and sparse sufficiency of image-level readouts are provided in Appendix~\ref{app:attention-amplification} and Appendix~\ref{app:sparse-sufficiency}.
To validate whether our theory aligns with practical observations, which are influenced by the projection head, multi-head attention, and registers, we conduct an experiment to measure the correlation between \texttt{[CLS]}-to-patch attention mass and the gradient norm received from the DINO loss. As shown in Fig.~\ref{fig:cls_attn_grad_correlation}, these two quantities show a strong Spearman correlation of 0.925.

\section{Methodology}
\label{sec:method}
In this section, we introduce Cross Distillation (CODI) as a target-side modification motivated by the analysis above. Specifically, within a typical self-distillation framework, we mix a small proportion of teacher class-token semantics into the patch targets. The analysis shows that CODI adds a target-level semantic correction that is not explicitly weighted by \texttt{[CLS]} attention mass. An overview of CODI is presented in Fig.~\ref{fig:overall}.

\begin{figure}[t]
    \centering
    \includegraphics[width=\linewidth]{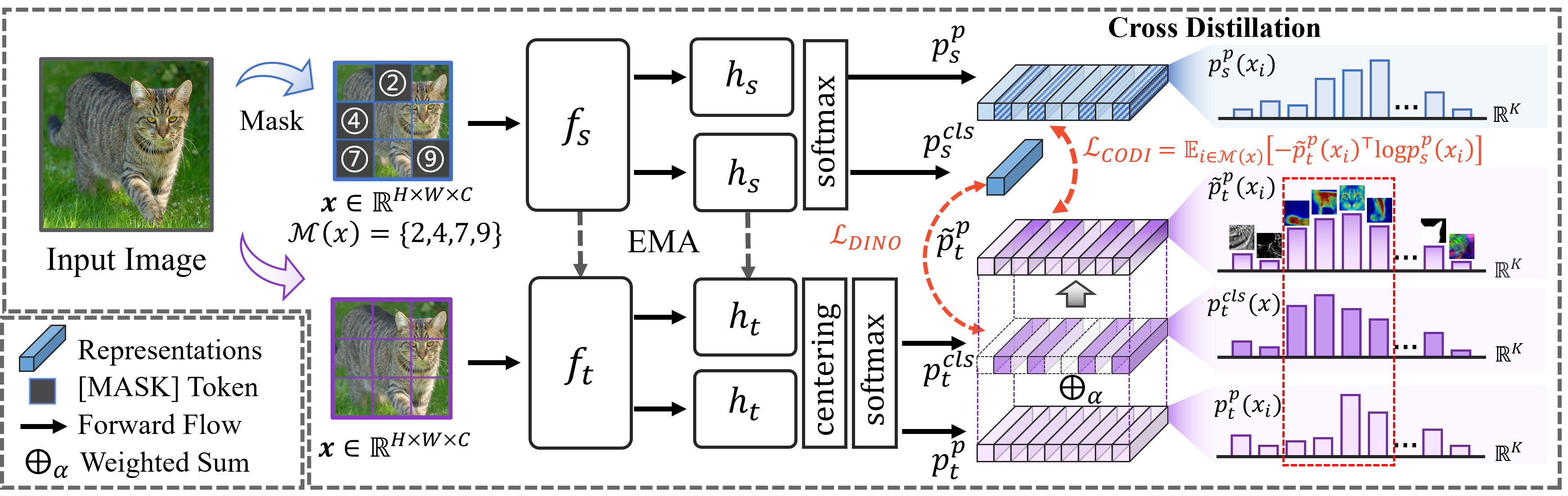}
    \caption{Overview of CODI\@. In the self-distillation framework, CODI constructs the masked-patch target by mixing the teacher patch-token distribution with a small teacher class-token component.}
    \label{fig:overall}
\end{figure}

\subsection{Self-Distillation Framework}
\label{sec:self_distill}

We build on a typical self-distillation framework \citep{dino, ibot, dinov2}. A student network $f_s$ is trained to match the target distributions produced by a momentum teacher network $f_t$, which is updated as an exponential moving average of the student. Both networks use a ViT backbone \citep{vit}. Given an augmented view $x$, the teacher produces a class-token embedding $z_t^{\mathrm{cls}}(x)$ and patch-token embeddings $\{z_{t,i}^{\mathrm{p}}(x)\}_{i=1}^{N}$, where $N$ is the number of patches. After a shared projection head, branch-specific centering, and sharpening, we obtain the teacher class-level and patch-level target distributions:
\begin{align}
p_t^{\mathrm{cls}}(x)
=
\mathrm{softmax}
\left(
\frac{
h_t(z_t^{\mathrm{cls}}(x)) - c^{\mathrm{cls}}
}{
\tau_t
}
\right), \quad 
p_t^{\mathrm{p}}(x_i)
=
\mathrm{softmax}
\left(
\frac{
h_t(z_{t,i}^{\mathrm{p}}(x)) - c^{\mathrm{p}}
}{
\tau_t
}
\right),
\end{align}
where $h_t$ is the shared teacher projection head that maps both class and patch tokens to $K$-dimensional prototype logits, $c^{\mathrm{cls}}$ and $c^{\mathrm{p}}$ are separate running centers for the class and patch teacher targets, and $\tau_t$ is the teacher temperature. Student predictions are obtained by applying softmax to student logits divided by the student temperature $\tau_s$; they follow the same shared-head design with the student head $h_s$, but do not use teacher centering.
The class-token distribution captures the global semantic assignment of the view, while the patch-token distribution provides local supervision for each patch. Our goal is to couple these two levels of supervision when constructing masked patch targets.

\subsection{Cross Distillation via Target Mixing}
\label{sec:cross_distill}

In masked patch self-distillation, each masked student patch is usually supervised by the corresponding teacher patch target alone. This target is locally informative, but it may emphasize low-level appearance details and does not explicitly encourage all patch representations to preserve discriminative semantics. We therefore introduce \emph{Cross Distillation} (CODI), a target-side mixing strategy that transfers global class-token semantics into masked patch supervision.

For each teacher-student patch-distillation pair, and for each masked patch position $i \in \mathcal{M}(x)$, we construct a mixed teacher target by combining the teacher patch distribution with the teacher class-token distribution from the same teacher view that provides the patch target:
\begin{equation}
\widetilde{p}_t^{\mathrm{p}}(x_i)
=
(1-\alpha)
p_t^{\mathrm{p}}(x_i)
+
\alpha
p_t^{\mathrm{cls}}(x),
\end{equation}
where $\alpha \in [0,1)$ controls the strength of global semantic guidance. When \(\alpha=0\), CODI reduces to the standard iBOT loss. 
Since class and patch tokens are projected by the same head, both targets are distributions over the same $K$ prototype coordinates, although they are centered with different running centers before the softmax. The centers shift the logits before softmax but do not change the prototype coordinate system; therefore both distributions remain defined over the same prototype indices, and their convex combination remains a valid teacher target. The CODI loss is defined as:
\begin{equation}
\mathcal{L}_{\mathrm{CODI}}
=
-
\frac{1}{|\mathcal{M}(x)|}
\sum_{i \in \mathcal{M}(x)}
\widetilde{p}_t^{\mathrm{p}}(x_i)^\top
\log p_s^{\mathrm{p}}(x_i),
\end{equation}
where $p_s^{\mathrm{p}}(x_i)$ is the student prediction for the masked patch.
For image-level supervision, we keep the standard DINO objective, which aligns student and teacher class tokens across augmented views:
\begin{equation}
\mathcal{L}_{\mathrm{DINO}}
=
-
\sum_{(u,v)\in\mathcal{V}}
p_t^{\mathrm{cls}}(v)^\top
\log p_s^{\mathrm{cls}}(u),
\end{equation}
where $\mathcal{V}$ denotes the set of teacher-student view pairs. The final objective is
\begin{equation}
\mathcal{L}
=
\mathcal{L}_{\mathrm{DINO}}
+
\lambda_1 \cdot
\mathcal{L}_{\mathrm{CODI}}
+
\lambda_2 \cdot
\mathcal{L}_{\mathrm{koleo}}.
\end{equation}

Here, $\mathcal{L}_{\mathrm{koleo}}$ denotes the KoLeo loss adopted from DINOv2 \citep{dinov2}. Intuitively, CODI modifies the label used in masked image modeling. Instead of asking each masked patch to reconstruct a purely patch-level teacher target, we encourage the masked patch prediction to contain global semantic information from the teacher class token. This weak global guidance helps the model avoid overfitting to local texture or appearance details during reconstruction, while preserving more discriminative image-level semantics.

\subsection{Theoretical Analysis}
\label{sec:theory}

The result in Sec.~\ref{subsec:theory} shows that the standard \texttt{[CLS]} path gates the global semantic signal to patch $i$ by the attention factor $\pi_i$. We now show that CODI adds a direct target-level semantic path whose strength is controlled by $\alpha$ and does not depend on $\pi_i$. For a masked patch $i$, write
\begin{equation}
    q_i^t:=p_t^{\mathrm{p}}(x_i),
    \qquad
    q_{\mathrm{cls}}^t:=p_t^{\mathrm{cls}}(x),
    \qquad
    \xi_i:=h_s(z_{s,i}^{\mathrm{p}})/\tau_s,
    \qquad
    p_i^s=\mathrm{softmax}(\xi_i),
\end{equation}
where $q_i^t$ and $q_{\mathrm{cls}}^t$ are the teacher targets produced by the same projection head with their respective centers, and $\xi_i$ is the temperature-scaled student patch logit. The original iBOT patch loss and the CODI patch loss are
\begin{equation}
    \mathcal{L}_{\mathrm{patch}}^0(i)=-(q_i^t)^\top\log p_i^s,
    \qquad
    \mathcal{L}_{\mathrm{patch}}^\alpha(i)=-(\widetilde q_i^t)^\top\log p_i^s,
    \qquad
    \widetilde q_i^t=(1-\alpha)q_i^t+\alpha q_{\mathrm{cls}}^t.
    \label{eq:mixed-target}
\end{equation}

\begin{theorem}[Target mixing gives attention-independent guidance]
\label{thm:codi-gradient}
Under negative logit-gradient flow for $\xi_i$,
\begin{equation}
    \dot\xi_i^0=q_i^t-p_i^s,
    \qquad
    \dot\xi_i^\alpha=q_i^t-p_i^s+
    \alpha(q_{\mathrm{cls}}^t-q_i^t).
    \label{eq:mixed-gradient-main}
\end{equation}
Thus, each masked patch prediction receives an additional target-level logit-gradient term
$\alpha(q_{\mathrm{cls}}^t-q_i^t)$, which is not multiplied by its \texttt{[CLS]} attention mass $\pi_i$.
\end{theorem}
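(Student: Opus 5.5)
The plan is to compute the gradient of each patch loss in Eq.~\eqref{eq:mixed-target} with respect to the student logit $\xi_i$ directly. Negative logit-gradient flow means $\dot\xi_i=-\nabla_{\xi_i}\mathcal{L}_{\mathrm{patch}}(i)$, so both formulas in Eq.~\eqref{eq:mixed-gradient-main} follow from one standard identity. Let $q$ be any probability vector on the $K$ prototypes, so $\mathbf{1}^\top q=1$, and let $p=\mathrm{softmax}(\xi)$. Then
\[
-\nabla_\xi\, q^\top\log p = p-q .
\]
To derive it, I would write $\log p_k=\xi_k-\log\sum_j e^{\xi_j}$. The first term contributes $-q$ to the gradient of $-q^\top\log p$. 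The second contributes $(\mathbf{1}^\top q)\,p=p$, and the normalization $\mathbf{1}^\top q=1$ is used exactly here.

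\textbf{Baseline target.} For $\alpha=0$, take $q=q_i^t$. This is a softmax output, hence a distribution, which gives $\dot\xi_i^0=q_i^t-p_i^s$.

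\textbf{Mixed target.} For $\mathcal{L}_{\mathrm{patch}}^\alpha$, first check that $\widetilde q_i^t$ is a probability vector. It is a convex combination of $q_i^t$ and $q_{\mathrm{cls}}^t$, with $\alpha\in[0,1)$. Both vectors are over the same $K$ prototype indices, because the head is shared and, as noted in Sec.~\ref{sec:cross_distill}, the centers only shift the logits. Its entries are therefore nonnegative and sum to one. The identity then gives
\[
\dot\xi_i^\alpha=\widetilde q_i^t-p_i^s=(1-\alpha)q_i^t+\alpha q_{\mathrm{cls}}^t-p_i^s ,
\]
and rearranging yields $q_i^t-p_i^s+\alpha(q_{\mathrm{cls}}^t-q_i^t)$. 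One could also skip the normalization check, since the loss is linear in the target: $\nabla_{\xi_i}\mathcal{L}^\alpha=(1-\alpha)\nabla\mathcal{L}^0+\alpha\nabla(-(q_{\mathrm{cls}}^t)^\top\log p_i^s)$, and each piece uses the identity with its own normalized target.

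\textbf{No attention weighting.} The final claim is that the extra term is not multiplied by $\pi_i$. Both targets are teacher quantities, so they are treated as fixed, stop-gradient constants, as in Eq.~\eqref{eq:dino-loss}. The extra term $\alpha(q_{\mathrm{cls}}^t-q_i^t)$ therefore depends only on $\alpha$ and the teacher outputs. It involves no student attention weights, unlike the factor $\pi_i$ appearing in Theorem~\ref{thm:cls-gating}. Any backpropagation from $\xi_i$ into the backbone goes through the student patch path $h_s(z^{\mathrm p}_{s,i})$ and not through the \texttt{[CLS]} readout.

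\textbf{Main obstacle.} There is essentially no technical obstacle. The only point needing care is interpretive: the phrase ``not multiplied by $\pi_i$'' concerns the logit-level gradient. Upstream, the patch logit itself depends on the network, but the statement only makes a claim at the level of $\xi_i$.
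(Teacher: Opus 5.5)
Your proposal is correct and follows the paper's proof: both apply the softmax cross-entropy identity $\nabla_\xi\bigl(-y^\top\log\mathrm{softmax}(\xi)\bigr)=\mathrm{softmax}(\xi)-y$ to the original and mixed targets, rearrange, and note that the extra term $\alpha(q_{\mathrm{cls}}^t-q_i^t)$ involves only $\alpha$ and teacher quantities, not $\pi_i$. Your derivation of that identity (which uses $\mathbf{1}^\top q=1$) and your check that $\widetilde q_i^t$ is normalized are details the paper leaves implicit by calling the identity standard.
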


\begin{remark}
    The mixed target moves the local patch target toward the global teacher distribution along a line segment in the prototype simplex. Unlike the \texttt{[CLS]} attention gradient, this correction is not explicitly gated by $\pi_i$.
\end{remark}

\section{Experiments}
We first describe the pretraining settings in Sec.~\ref{subsec:exp_setting}. Then, we evaluate CODI across seven protocols in Sec.~\ref{subsec:exp_overall} and compare with Gram anchoring in Sec.~\ref{subsec:gram_anchoring}. Finally, in Sec.~\ref{subsec:exp_understand}, we discuss the effect of cross distillation. Additional results on unsupervised segmentation, object detection, and smaller models are provided in Appendix~\ref{app:exps}.

\providecolor{ibotcyan}{cmyk}{.3,0,0,0}
\begin{table*}[t]
    \centering
    \caption{Quantitative results of linear semantic segmentation and classification. Gray rows denote oracles trained with substantially larger data/model; $\dagger$ denotes our reproduced result. The best and second-best results are highlighted with \textbf{bold} and \underline{underlined}, respectively.}
    \label{tab:segmentation_main}
    \label{tab:classification_main}
    \resizebox{\linewidth}{!}{
        \begin{tabular}{llcc|cc|cc|cc|cc|cc}
            \toprule
            \multirow{3}{*}{Methods} & \multirow{3}{*}{Arch.} & \multirow{3}{*}{Dataset} & \multirow{3}{*}{Epochs} & \multicolumn{6}{c|}{Semantic Segmentation} & \multicolumn{4}{c}{ImageNet-1K Classification} \\
            \cmidrule(lr){5-10} \cmidrule(lr){11-14}
            & & & & \multicolumn{2}{c|}{COCO} & \multicolumn{2}{c|}{ADE20K} & \multicolumn{2}{c|}{VOC} & \multicolumn{2}{c|}{$k$-NN} & \multicolumn{2}{c}{Linear} \\
            \cmidrule(lr){5-6} \cmidrule(lr){7-8} \cmidrule(lr){9-10} \cmidrule(lr){11-12} \cmidrule(lr){13-14}
            & & & & mIoU & Acc. & mIoU & Acc. & mIoU & Acc. & Top-1 & Top-5 & Top-1 & Top-5 \\
            \midrule
            \multicolumn{14}{l}{\textit{Distilled from Larger Models}} \\
            \rowcolor{gray!12}
            \textcolor{gray!65!black}{DINOv2 \citep{dinov2}} & \textcolor{gray!65!black}{ViT-S/14} & \textcolor{gray!65!black}{LVD-142M} & \textcolor{gray!65!black}{--} & \textcolor{gray!65!black}{57.4} & \textcolor{gray!65!black}{77.7} & \textcolor{gray!65!black}{43.5} & \textcolor{gray!65!black}{78.9} & \textcolor{gray!65!black}{79.4} & \textcolor{gray!65!black}{95.3} & \textcolor{gray!65!black}{79.0} & \textcolor{gray!65!black}{91.5} & \textcolor{gray!65!black}{80.4} & \textcolor{gray!65!black}{95.1} \\
            \rowcolor{gray!12}
            \textcolor{gray!65!black}{DINOv2 \citep{dinov2}} & \textcolor{gray!65!black}{ViT-B/14} & \textcolor{gray!65!black}{LVD-142M} & \textcolor{gray!65!black}{--} & \textcolor{gray!65!black}{59.1} & \textcolor{gray!65!black}{78.4} & \textcolor{gray!65!black}{46.3} & \textcolor{gray!65!black}{79.8} & \textcolor{gray!65!black}{81.3} & \textcolor{gray!65!black}{95.7} & \textcolor{gray!65!black}{82.0} & \textcolor{gray!65!black}{92.3} & \textcolor{gray!65!black}{83.9} & \textcolor{gray!65!black}{96.6} \\
            \rowcolor{gray!12}
            \textcolor{gray!65!black}{DINOv2 \citep{dinov2}} & \textcolor{gray!65!black}{ViT-L/14} & \textcolor{gray!65!black}{LVD-142M} & \textcolor{gray!65!black}{--} & \textcolor{gray!65!black}{58.2} & \textcolor{gray!65!black}{77.5} & \textcolor{gray!65!black}{46.5} & \textcolor{gray!65!black}{79.1} & \textcolor{gray!65!black}{80.4} & \textcolor{gray!65!black}{95.4} & \textcolor{gray!65!black}{83.8} & \textcolor{gray!65!black}{94.3} & \textcolor{gray!65!black}{85.9} & \textcolor{gray!65!black}{97.4} \\
            \midrule
            \multicolumn{14}{l}{\textit{Small Models}} \\
            MoCo v2 \citep{mocov2} & RN-50 & IN1K & 800 & 37.6 & 64.0 & 20.0 & 61.2 & 53.3 & 88.4 & 61.8 & 82.8 & 43.1 & 72.8 \\
            SwAV \citep{swav} & RN-50 & IN1K & 800 & 42.2 & 68.0 & 23.3 & 65.8 & 55.4 & 89.0 & 65.7 & 84.9 & 64.2 & 85.8 \\
            DINO \citep{dino} & RN-50 & IN1K & 800 & 40.7 & 66.6 & 22.4 & 64.5 & 53.1 & 88.4 & 67.5 & 86.0 & 65.9 & 87.2 \\
            MoCo v3 \citep{mocov3} & ViT-S/16 & IN1K & 300 & 43.4 & 70.6 & 24.4 & 70.5 & 61.7 & 91.4 & 32.2 & 50.5 & 54.8 & 78.8 \\
            DINO \citep{dino} & ViT-S/16 & IN1K & 800 & 39.5 & 67.1 & 23.1 & 67.4 & 50.0 & 88.4 & 74.4 & 90.0 & 76.0 & 92.7 \\
            iBOT \citep{ibot} & ViT-S/16 & IN1K & 800 & 47.7 & 72.7 & 28.5 & 72.3 & 66.7 & 92.4 & 75.1 & 89.1 & 77.0 & 93.0 \\
            Mugs \citep{mugs} & ViT-S/16 & IN1K & 800 & 48.7 & 72.6 & \textbf{30.9} & \underline{72.8} & \underline{71.8} & \underline{93.3} & 75.5 & 89.0 & 77.7 & 93.5 \\
            Leopart \citep{leopart} & ViT-S/16 & IN1K+COCO & 800+50 & \underline{48.8} & \underline{73.2} & 29.7 & 72.1 & 68.2 & 92.6 & 52.0 & 74.9 & 64.6 & 85.3 \\
            DINOv2$^\dagger$ \citep{dinov2} & ViT-S/14 & IN1K & 800 & 48.1 & 72.7 & 29.6 & 72.6 & 68.3 & 92.7 & \underline{76.9} & \underline{91.4} & \underline{77.9} & \underline{93.7} \\
            \rowcolor{ibotcyan!50}
            \textbf{CODI (ours)} & ViT-S/14 & IN1K & 800 & \textbf{50.3} & \textbf{74.0} & \underline{30.8} & \textbf{73.4} & \textbf{73.1} & \textbf{93.6} & \textbf{77.1} & \textbf{91.7} & \textbf{78.3} & \textbf{94.1} \\
            \midrule
            \multicolumn{14}{l}{\textit{Base Models}} \\
            CAE \citep{cae} & ViT-B/16 & IN1K & 800 & 45.5 & 71.0 & 27.0 & 71.9 & 64.9 & 92.2 & 37.7 & 60.9 & 21.0 & 42.0 \\
            CAE \citep{cae} & ViT-B/16 & IN1K & 1600 & 46.2 & 71.6 & 27.5 & 72.2 & 65.6 & 92.3 & 33.7 & 53.0 & 20.9 & 41.8 \\
            MAE \citep{mae} & ViT-B/16 & IN1K & 800 & 37.9 & 65.5 & 20.6 & 66.2 & 52.3 & 88.8 & 18.4 & 32.7 & 39.2 & 62.3 \\
            MoCo v3 \citep{mocov3} & ViT-B/16 & IN1K & 300 & 48.2 & 73.0 & 29.6 & 72.9 & 69.6 & 92.9 & 71.3 & 88.4 & 60.8 & 83.8 \\
            DINO \citep{dino} & ViT-B/16 & IN1K & 800 & 45.7 & 71.5 & 28.8 & 72.0 & 63.6 & 91.6 & 76.1 & 89.8 & 77.8 & 93.7 \\
            DINO \citep{dino} & ViT-B/8 & IN1K & 800 & 47.0 & 72.3 & 30.5 & 72.7 & 65.3 & 92.3 & 77.3 & 90.5 & 79.7 & 94.8 \\
            iBOT \citep{ibot} & ViT-B/16 & IN1K & 400 & 52.2 & 75.3 & 34.3 & 75.2 & 73.3 & \underline{93.9} & 77.0 & 90.3 & 79.2 & 94.5 \\
            Mugs \citep{mugs} & ViT-B/16 & IN1K & 400 & 51.4 & 74.1 & 34.6 & 74.5 & 72.7 & 93.5 & 78.0 & \textbf{91.3} & 80.1 & 94.5 \\
            DINOv2$^\dagger$ \citep{dinov2} & ViT-B/14 & IN1K & 400 & \underline{53.0} & \underline{75.6} & \underline{35.5} & \underline{75.8} & \underline{74.9} & \underline{93.9} & \underline{78.9} & 90.7 & \underline{81.0} & \underline{95.3} \\
            \rowcolor{ibotcyan!50}
            \textbf{CODI (ours)} & ViT-B/14 & IN1K & 400 & \textbf{55.1} & \textbf{76.6} & \textbf{36.3} & \textbf{76.2} & \textbf{77.2} & \textbf{94.6} & \textbf{79.2} & \underline{90.8} & \textbf{81.2} & \textbf{95.5} \\
            \midrule
            \multicolumn{14}{l}{\textit{Large Models}} \\
            CAE \citep{cae} & ViT-L/16 & IN1K & 1600 & 48.8 & 73.1 & 31.9 & 74.6 & 70.7 & 93.2 & 64.0 & 81.9 & 48.2 & 70.9 \\
            iBOT \citep{ibot} & ViT-L/16 & IN1K & 250 & 53.5 & 76.2 & 36.0 & 76.0 & 74.7 & 94.4 & 78.0 & 90.8 & 80.6 & 95.3 \\
            Mugs \citep{mugs} & ViT-L/16 & IN1K & 250 & 53.2 & 75.0 & 36.3 & 75.3 & 74.6 & 93.9 & 80.2 & 91.5 & 81.3 & \underline{95.4} \\
            I-JEPA \citep{ijepa} & ViT-H/14 & IN1K & 300 & 45.7 & 69.3 & 28.1 & 70.2 & 69.6 & 92.3 & 71.4 & 87.9 & 75.4 & 91.0 \\
            DINOv2$^\dagger$ \citep{dinov2} & ViT-L/14 & IN1K & 200 & \underline{56.1} & \underline{77.5} & \underline{39.1} & \underline{77.6} & \underline{78.0} & \underline{94.9} & \underline{80.4} & \textbf{92.3} & \underline{81.6} & \underline{95.4} \\
            \rowcolor{ibotcyan!50}
            \textbf{CODI (ours)} & ViT-L/14 & IN1K & 200 & \textbf{58.1} & \textbf{78.3} & \textbf{40.6} & \textbf{78.1} & \textbf{80.1} & \textbf{95.5} & \textbf{81.2} & \underline{91.7} & \textbf{82.6} & \textbf{96.2} \\
            \bottomrule
        \end{tabular}
    }
\end{table*}

\subsection{Pretraining Setting}
\label{subsec:exp_setting}
Following the standard setup \citep{dino, ibot}, we pretrain our model on ImageNet-1K \citep{imagenet} using the AdamW optimizer \citep{adamw} with a batch size of 1024. We use a ViT with a patch size of 14 as the encoder backbone and employ register tokens \citep{register}. We pretrain CODI with ViT-S/14 for 800 epochs, ViT-B/14 for 400 epochs, and ViT-L/14 for 200 epochs. The initial learning rate is set to $2\times10^{-3} \times \sqrt{\texttt{batch\_size}/1024}$ and decays with a cosine annealing scheduler. The \texttt{[CLS]}-to-patch distribution mixing ratio $\alpha$ is set to $0.1$. Detailed parameter settings are provided in Appendix~\ref{app:pretrain_setting}.

\subsection{Overall Performance}
\label{subsec:exp_overall}
In this subsection, we evaluate CODI across seven protocols. We compare with representative SSL methods from different paradigms, including 1) \textbf{Contrastive Methods:} MoCo v2 \citep{mocov2} and MoCo v3 \citep{mocov3}, 2) \textbf{Masked Modeling Methods:} MAE \citep{mae} and CAE \citep{cae}, 3) \textbf{Clustering-based Methods:} SwAV \citep{swav}, DINO \citep{dino}, iBOT \citep{ibot}, Mugs \citep{mugs}, and DINOv2 \citep{dinov2}, and 4) \textbf{Dense SSL Method:} Leopart \citep{leopart}. 

\providecolor{ibotcyan}{cmyk}{.3,0,0,0}
\begin{table*}[t]
    \centering
    \caption{Depth estimation performance on NYU-Depth V2, KITTI, and SUNRGBD zero-shot transfer. Rank is computed within each model size.}
    \label{tab:depth_estimation_main}
    \resizebox{\linewidth}{!}{
        \begin{tabular}{llc|ccc|ccc|ccc|c}
            \toprule
            Methods & Arch. & Epochs & \multicolumn{3}{c|}{NYU-Depth V2} & \multicolumn{3}{c|}{KITTI} & \multicolumn{3}{c|}{NYU$\rightarrow$SUNRGBD} & Rank \\
            \cmidrule(lr){4-6} \cmidrule(lr){7-9} \cmidrule(lr){10-12} \cmidrule(lr){13-13}
            & & & $\delta_1\uparrow$ & Rel $\downarrow$ & RMSE $\downarrow$ & $\delta_1\uparrow$ & Rel $\downarrow$ & RMSE $\downarrow$ & $\delta_1\uparrow$ & Rel $\downarrow$ & RMSE $\downarrow$ & $\downarrow$ \\
            \midrule
            \multicolumn{13}{l}{\textit{Distilled from Larger Models}} \\
            \rowcolor{gray!12}
            \textcolor{gray!65!black}{DINOv2 \citep{dinov2}} & \textcolor{gray!65!black}{ViT-S/14} & \textcolor{gray!65!black}{--} & \textcolor{gray!65!black}{84.8} & \textcolor{gray!65!black}{0.134} & \textcolor{gray!65!black}{0.437} & \textcolor{gray!65!black}{91.0} & \textcolor{gray!65!black}{0.096} & \textcolor{gray!65!black}{3.119} & \textcolor{gray!65!black}{77.4} & \textcolor{gray!65!black}{0.269} & \textcolor{gray!65!black}{0.473} & \textcolor{gray!65!black}{--} \\
            \rowcolor{gray!12}
            \textcolor{gray!65!black}{DINOv2 \citep{dinov2}} & \textcolor{gray!65!black}{ViT-B/14} & \textcolor{gray!65!black}{--} & \textcolor{gray!65!black}{87.8} & \textcolor{gray!65!black}{0.120} & \textcolor{gray!65!black}{0.408} & \textcolor{gray!65!black}{92.0} & \textcolor{gray!65!black}{0.092} & \textcolor{gray!65!black}{2.931} & \textcolor{gray!65!black}{80.7} & \textcolor{gray!65!black}{0.255} & \textcolor{gray!65!black}{0.443} & \textcolor{gray!65!black}{--} \\
            \rowcolor{gray!12}
            \textcolor{gray!65!black}{DINOv2 \citep{dinov2}} & \textcolor{gray!65!black}{ViT-L/14} & \textcolor{gray!65!black}{--} & \textcolor{gray!65!black}{89.5} & \textcolor{gray!65!black}{0.111} & \textcolor{gray!65!black}{0.390} & \textcolor{gray!65!black}{92.3} & \textcolor{gray!65!black}{0.090} & \textcolor{gray!65!black}{2.891} & \textcolor{gray!65!black}{81.4} & \textcolor{gray!65!black}{0.248} & \textcolor{gray!65!black}{0.439} & \textcolor{gray!65!black}{--} \\
            \midrule
            \multicolumn{13}{l}{\textit{Small Models}} \\
            MoCo v2 \citep{mocov2} & RN-50 & 800 & 62.9 & 0.230 & 0.719 & 82.4 & 0.130 & 4.319 & 55.5 & 0.366 & 0.749 & 7.0 \\
            SwAV \citep{swav} & RN-50 & 800 & 61.9 & 0.225 & 0.721 & 84.3 & 0.123 & 3.985 & 55.0 & 0.367 & 0.753 & 6.2 \\
            DINO \citep{dino} & RN-50 & 800 & 61.6 & 0.228 & 0.726 & 83.7 & 0.127 & 4.068 & 54.7 & 0.370 & 0.755 & 7.7 \\
            MoCo v3 \citep{mocov3} & ViT-S/16 & 300 & 69.8 & 0.192 & 0.615 & 77.2 & 0.170 & 5.007 & 50.6 & 0.411 & 0.815 & 7.7 \\
            DINO \citep{dino} & ViT-S/16 & 800 & 67.7 & 0.202 & 0.651 & 83.9 & 0.132 & 4.061 & 59.6 & 0.343 & 0.678 & 5.8 \\
            iBOT \citep{ibot} & ViT-S/16 & 800 & 72.1 & 0.183 & 0.584 & 85.3 & 0.125 & 3.907 & 63.5 & 0.330 & 0.634 & 3.3 \\
            Mugs \citep{mugs} & ViT-S/16 & 800 & 72.3 & \underline{0.179} & 0.586 & 84.1 & 0.132 & 3.963 & \underline{64.4} & \textbf{0.306} & \underline{0.613} & 3.4 \\
            DINOv2$^\dagger$ \citep{dinov2} & ViT-S/14 & 800 & \underline{72.7} & 0.184 & \underline{0.583} & \textbf{86.6} & \underline{0.112} & \textbf{3.757} & 63.2 & 0.332 & 0.631 & \underline{2.6} \\
            \rowcolor{ibotcyan!50}
            \textbf{CODI (ours)} & ViT-S/14 & 800 & \textbf{74.1} & \textbf{0.176} & \textbf{0.570} & \underline{86.5} & \textbf{0.109} & \underline{3.834} & \textbf{64.5} & \underline{0.323} & \textbf{0.610} & \textbf{1.3} \\
            \midrule
            \multicolumn{13}{l}{\textit{Base Models}} \\
            MAE \citep{mae} & ViT-B/16 & 800 & 65.3 & 0.224 & 0.673 & 84.3 & 0.123 & 4.087 & 59.1 & 0.350 & 0.688 & 8.0 \\
            MoCo v3 \citep{mocov3} & ViT-B/16 & 300 & 72.5 & 0.181 & 0.585 & 85.3 & 0.122 & 3.954 & 65.9 & 0.308 & 0.606 & 5.9 \\
            DINO \citep{dino} & ViT-B/16 & 800 & 71.7 & 0.182 & 0.593 & 86.5 & 0.117 & 3.771 & 62.8 & 0.336 & 0.644 & 6.7 \\
            DINO \citep{dino} & ViT-B/8 & 800 & 72.8 & 0.178 & 0.579 & 86.9 & 0.115 & 3.697 & 65.1 & 0.314 & 0.613 & 5.3 \\
            iBOT \citep{ibot} & ViT-B/16 & 400 & 77.7 & \textbf{0.160} & 0.523 & \textbf{88.3} & \underline{0.110} & 3.514 & \textbf{69.1} & \underline{0.299} & \textbf{0.565} & \underline{2.1} \\
            Mugs \citep{mugs} & ViT-B/16 & 400 & 74.9 & \underline{0.164} & 0.550 & 87.7 & 0.111 & 3.620 & 66.6 & \textbf{0.294} & 0.585 & 3.6 \\
            DINOv2$^\dagger$ \citep{dinov2} & ViT-B/14 & 400 & \underline{78.1} & \textbf{0.160} & \underline{0.519} & \underline{88.0} & 0.112 & \underline{3.502} & \underline{68.0} & 0.310 & 0.570 & 2.8 \\
            \rowcolor{ibotcyan!50}
            \textbf{CODI (ours)} & ViT-B/14 & 400 & \textbf{78.3} & \textbf{0.160} & \textbf{0.516} & \textbf{88.3} & \textbf{0.109} & \textbf{3.487} & 67.8 & 0.306 & \underline{0.568} & \textbf{1.7} \\
            \midrule
            \multicolumn{13}{l}{\textit{Large Models}} \\
            iBOT \citep{ibot} & ViT-L/16 & 250 & 80.4 & \underline{0.147} & 0.492 & \underline{90.5} & \textbf{0.096} & \underline{3.253} & 70.1 & 0.303 & 0.551 & 2.7 \\
            Mugs \citep{mugs} & ViT-L/16 & 250 & 76.7 & 0.158 & 0.524 & 88.4 & 0.106 & 3.485 & 68.0 & \underline{0.299} & 0.574 & 4.1 \\
            I-JEPA \citep{ijepa} & ViT-H/14 & 300 & 76.5 & 0.170 & 0.534 & 86.3 & 0.119 & 3.725 & 68.4 & 0.303 & 0.568 & 4.7 \\
            DINOv2$^\dagger$ \citep{dinov2} & ViT-L/14 & 200 & \underline{81.4} & \underline{0.147} & \underline{0.481} & 90.4 & \underline{0.097} & 3.266 & \underline{71.5} & \underline{0.299} & \underline{0.529} & \underline{2.4} \\
            \rowcolor{ibotcyan!50}
            \textbf{CODI (ours)} & ViT-L/14 & 200 & \textbf{82.0} & \textbf{0.145} & \textbf{0.472} & \textbf{90.6} & \textbf{0.096} & \textbf{3.244} & \textbf{71.7} & \textbf{0.297} & \textbf{0.528} & \textbf{1.1} \\
            \bottomrule
        \end{tabular}
    }
\end{table*}

\subsubsection{Evaluation on Dense Prediction Tasks}

\noindent\textbf{Semantic Segmentation.} To evaluate whether CODI learns well-structured representations, we use the standard off-the-shelf linear probing setting. The pretrained models are frozen, and a linear head is trained on top of the feature extractor to predict the label of each pixel. We sweep the learning rate of the linear head over a predefined set. Detailed settings are provided in Appendix~\ref{app:eval_seg}.
We evaluate model performance on the widely used COCO \citep{coco}, Pascal VOC \citep{voc}, and ADE20K \citep{ade20k} datasets. The results are shown in Tab.~\ref{tab:segmentation_main}. Among ImageNet-1K-pretrained baselines with comparable model scale, CODI achieves strong segmentation performance. Compared with reproduced DINOv2 under the aligned setting, CODI improves the average mIoU by $+2.7$, $+1.7$, and $+1.9$ for ViT-S/14, ViT-B/14, and ViT-L/14, respectively. Specifically, CODI achieves 80.1 mIoU with the ViT-Large model on Pascal VOC, approaching the original large-data DINOv2 \citep{dinov2} reference while using only ImageNet-1K pretraining in our setting.

\noindent\textbf{Depth Estimation.} Similar to the semantic segmentation setting, we freeze the encoder and train a linear head to predict depth. Following DINOv3 \citep{dinov3}, we project depth values into discrete bins and sweep the learning rate for the projection head. 
We evaluate the models on the NYU-Depth v2 \citep{nyu_depth_v2}, KITTI \citep{kitti}, and SUNRGBD datasets. Specifically, SUNRGBD is evaluated under a zero-shot transfer setting, where models are trained on NYU-Depth v2 and tested only on SUNRGBD\@. The results are shown in Tab.~\ref{tab:depth_estimation_main}. Although CODI does not obtain the best performance on all tasks, it still achieves the highest overall ranking, showing its effectiveness across settings. For ViT-S/14, CODI improves NYU $\delta_1$ by $+1.4$ over reproduced DINOv2, and CODI matches or improves reproduced DINOv2 in 24 out of 27 metric comparisons across all settings.

\noindent\textbf{Semi-supervised Video Object Segmentation.} We also evaluate CODI on the DAVIS 2017 dataset \citep{davis}. Following the standard evaluation protocol, the pixel label of the first frame is given, and labels in the following frames are propagated using representation similarity. In this setting, we also sweep several key hyperparameters and report the best results. Detailed settings are introduced in Appendix~\ref{app:eval_vos}.
As shown in Tab.~\ref{tab:vos_main}, CODI achieves $66.0$ on $\mathcal{J}\&\mathcal{F}$ with the ViT-L/14 model, surpassing all comparable (non-DINOv3) baselines. Notably, CODI outperforms the original DINOv2, which is distilled from a larger model, by a clear margin of $+1.8$ on $\mathcal{J}\&\mathcal{F}$. These results suggest that CODI features are more effective for affinity-based mask propagation.

\subsubsection{Evaluation on Image-level Tasks}

\providecolor{ibotcyan}{cmyk}{.3,0,0,0}
\begin{figure}[t]
    \centering
    \providecolor{ibotcyan}{cmyk}{.3,0,0,0}
\begin{minipage}[t]{0.33\textwidth}
    \vspace{0pt}
    \centering
    \captionof{table}{DAVIS semi-supervised VOS results.}
    \label{tab:vos_main}
    \setlength{\tabcolsep}{2pt}
    \scriptsize
    \resizebox{\linewidth}{!}{
        \begin{tabular}{llccc}
            \toprule
            Methods & Arch. & $\mathcal{J}\&\mathcal{F}$ & $\mathcal{J}_{\mathrm{mean}}$ & $\mathcal{F}_{\mathrm{mean}}$ \\
            \midrule
            \rowcolor{gray!12}
            \textcolor{gray!65!black}{DINOv3} & \textcolor{gray!65!black}{ViT-7B/16} & \textcolor{gray!65!black}{71.1} & \textcolor{gray!65!black}{--} & \textcolor{gray!65!black}{--} \\
            \midrule
            DINO & ViT-B/16 & 63.2 & 61.5 & 64.9 \\
            iBOT & ViT-B/16 & 64.3 & 63.2 & 65.5 \\
            CAE & ViT-L/16 & 65.1 & 62.8 & 67.5 \\
            iBOT & ViT-L/16 & 63.8 & 62.1 & 65.5 \\
            Mugs & ViT-L/16 & 62.1 & 60.3 & 63.0 \\
            DINOv2 & ViT-L/14 & 64.2 & 62.2 & 66.3 \\
            \midrule
            \rowcolor{ibotcyan!50}
            \textbf{CODI (ours)} & ViT-B/14 & \underline{65.7} & \textbf{63.6} & \underline{67.9} \\
            \rowcolor{ibotcyan!50}
            \textbf{CODI (ours)} & ViT-L/14 & \textbf{66.0} & \underline{63.3} & \textbf{68.7} \\
            \bottomrule
        \end{tabular}
    }
\end{minipage}

    \hfill
    \begin{minipage}[t]{0.29\textwidth}
        \vspace{0pt}
        \centering
        \captionof{table}{ImageNet-1K full fine-tuning Top-1 accuracy.}
        \label{tab:full_finetuning}
        \setlength{\tabcolsep}{2pt}
        \scriptsize
        \resizebox{\linewidth}{!}{
            \begin{tabular}{llcc}
                \toprule
                Method & Arch. & Epochs & Acc. \\
                \midrule
                MoCo v3 & ViT-S/16 & 300 & 81.4 \\
                DINO & ViT-S/16 & 800 & 82.0 \\
                iBOT & ViT-S/16 & 800 & \underline{82.3} \\
                \rowcolor{ibotcyan!50}
                \textbf{CODI (ours)} & ViT-S/14 &800 & \textbf{82.7} \\
                \midrule
                MoCo v3 & ViT-B/16 & 300 & 83.2 \\
                DINO & ViT-B/16 & 400 & 83.6 \\
                iBOT & ViT-B/16 & 400 & \underline{84.0} \\
                \rowcolor{ibotcyan!50}
                \textbf{CODI (ours)} & ViT-B/14 & 400& \textbf{84.3}\\
                \bottomrule
            \end{tabular}
        }
    \end{minipage}
    \hfill
    \begin{minipage}[t]{0.34\textwidth}
        \vspace{0pt}
        \centering
        \captionof{figure}{ImageNet-1K $k$-NN classification under occlusion.}
        \label{fig:topk_vs_masked_proportion}
        \includegraphics[width=\linewidth]{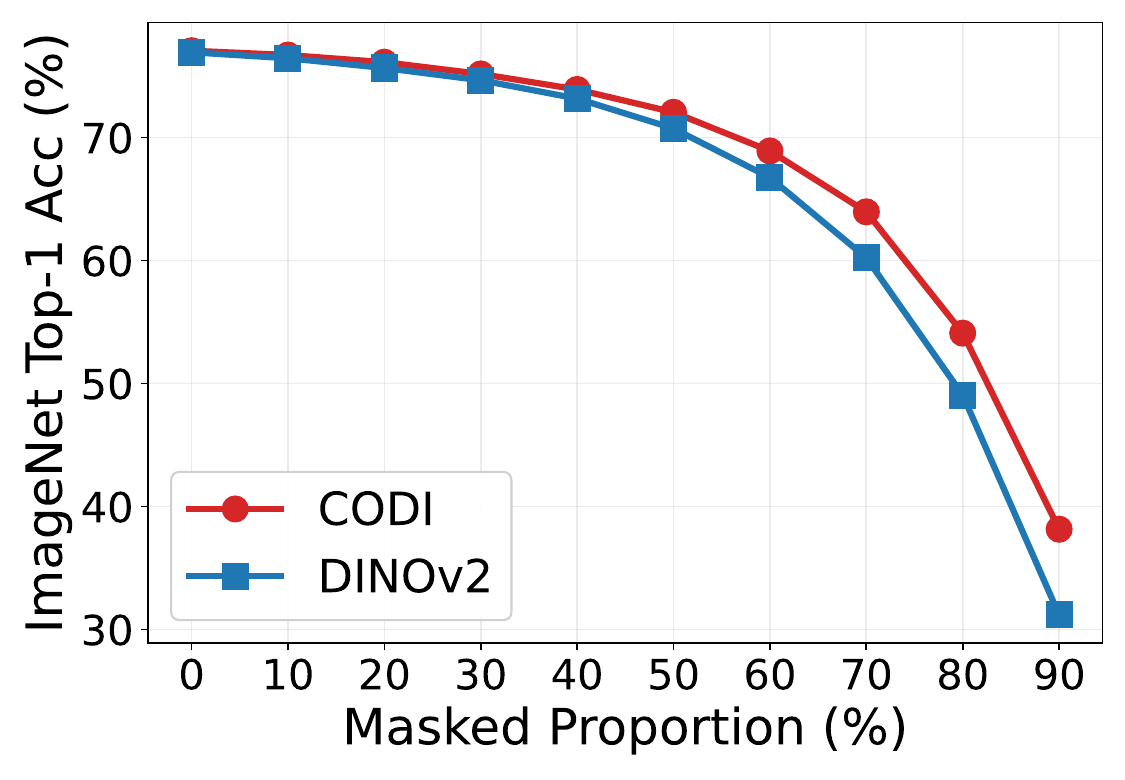}
    \end{minipage}
\end{figure}

\noindent\textbf{Image Classification with Frozen Backbone.} To provide a comprehensive analysis of the effect of CODI, we evaluate the models on image classification tasks. Following the standard protocol \citep{dino, dinov2}, we use two classifiers, a parameter-free $k$-NN classifier and a linear classifier, and report Top-1 and Top-5 accuracy. We also sweep the hyperparameters of these heads and report the best results. Details are provided in Appendix~\ref{app:eval_cls_frozen}.
As shown in Tab.~\ref{tab:classification_main}, we observe a consistent improvement in Top-1 classification performance. Notably, the Top-1 performance gain scales with model size: with ViT-Large, CODI achieves improvements of $+0.8$ and $+1.0$ in the $k$-NN and linear classification settings, respectively, compared with DINOv2.

\noindent\textbf{Full Fine-Tuning on ImageNet-1K.} Following the evaluation protocol of iBOT \cite{ibot}, we further fine-tune the model on ImageNet-1K for 100 epochs. As shown in Tab.~\ref{tab:full_finetuning}, although the performance gain is reduced after fine-tuning, CODI still improves Top-1 accuracy by $+0.4$, showing that it provides a better initialization.

\noindent\textbf{Classification with Patch Occlusion.} To study how patch-level target mixing improves image-level performance, we conduct an additional experiment on classification under occlusion. For each input image, we randomly mask a proportion of patches and use the $k$-NN classifier. The results are shown in Fig.~\ref{fig:topk_vs_masked_proportion}. Compared with DINOv2, CODI achieves better performance at every masking ratio from $0\%$ to $90\%$, and the performance gap increases as the masked ratio grows. This suggests that CODI reduces reliance on a small set of highly discriminative patches. As a result, CODI is more robust when representative patches are missing.

\subsection{Comparison with Gram Anchoring}
\label{subsec:gram_anchoring}
We compare CODI with DINOv3's Gram anchoring~\citep{dinov3} under an aligned ViT-S/14 setting. The Gram-anchoring results are obtained by fine-tuning a DINOv2 checkpoint for 100K iterations. For a fair comparison, we tune the resolution in $\{224\times224,448\times448\}$, whether to distill from the previous best or final checkpoint, and whether to update the Gram-anchor teacher, and report the best result. We also fine-tune DINOv2 with the CODI loss for the same number of iterations.
\providecolor{ibotcyan}{cmyk}{.3,0,0,0}
\begin{table}[t]
    \centering
    \caption{Comparison of ViT-S/14 pretraining and fine-tuning settings on ImageNet-1K. Gram anchoring is applied to a DINOv2 checkpoint.}
    \label{tab:gram_anchoring}
    \begin{tabular}{lrrrr}
        \toprule
        Method & Pretraining & Fine-tuning & ImageNet & VOC \\
               & iterations & iterations & $k$-NN (\%) & mIoU (\%) \\
        \midrule
        DINOv2 & 1M & -- & \underline{76.9} & 68.3 \\
        DINOv2 (best checkpoint) & 625K & -- & 75.9 & 69.4 \\
        DINOv2 + Gram anchoring & 1M & 100K & 76.7 & 70.2 \\
        DINOv2 + CODI fine-tuning & 1M & 100K & 76.8 & \underline{71.4} \\
        \rowcolor{ibotcyan!50}
        \textbf{CODI} & 1M & -- & \textbf{77.1} & \textbf{73.1} \\
        \bottomrule
    \end{tabular}
\end{table}

As shown in Tab.~\ref{tab:gram_anchoring}, although Gram anchoring recovers degraded dense performance through distillation, CODI fine-tuning outperforms it by $+1.2$ mIoU under the aligned setting. Training CODI from scratch further improves performance, surpassing the Gram-anchoring baseline by $+2.9$ mIoU with fewer training iterations. Fine-tuning does not improve classification performance, and training CODI from scratch still yields the best results. Comparisons with the distilled DINOv3 and SigLIP2 checkpoints are provided in Appendix~\ref{app:reference_models}.

\subsection{The Effect of Cross Distillation}
\label{subsec:exp_understand}
\begin{table}[t]
    \centering
    \providecolor{ibotcyan}{cmyk}{.3,0,0,0}
\begin{minipage}[t]{0.54\textwidth}
    \vspace{0pt}
    \centering
    \captionof{table}{Ablation of the CODI mixing ratio $\alpha$ and target distribution.}
    \label{tab:alpha_ablation}
    \small
    \setlength{\tabcolsep}{3pt}
        \begin{tabular}{lccccc}
            \toprule
            Target & $\alpha$ & COCO & VOC & ADE & $k$-NN \\
            \midrule
            None & 0.0 & 48.1 & 68.3 & 29.6 & 76.9 \\
            Averaged patch & 0.1 & 48.9 & 69.8 & \underline{29.9} & 76.6 \\
            \rowcolor{ibotcyan!50}
            \texttt{[CLS]} & 0.1 & \textbf{50.3} & \textbf{73.1} & \textbf{30.8} & \underline{77.1} \\
            \texttt{[CLS]} & 0.2 & \underline{49.2} & \underline{71.5} & 29.7 & \textbf{77.2} \\
            \texttt{[CLS]} & 0.5 & 42.7 & 60.5 & 25.6 & 76.1 \\
            Uniform & 0.1 & 45.8 & 62.5 & 27.3 & 75.8 \\
            \bottomrule
        \end{tabular}
\end{minipage}

    \hfill
    \providecolor{ibotcyan}{cmyk}{.3,0,0,0}
\begin{minipage}[t]{0.43\textwidth}
    \vspace{0pt}
    \centering
    \captionof{table}{Semantic alignment and regularization after 100 epochs on ImageNet-1K.}
    \label{tab:semantic_regularization}
    \small
    \setlength{\tabcolsep}{3pt}
    \begin{tabular}{lrr}
        \toprule
        Method & $k$-NN & VOC \\
        \midrule
        DINOv2 & \underline{71.1} & 61.4 \\
        Random-image \texttt{[CLS]} & 70.6 & 62.1 \\
        Entropy-matched patch & 70.8 & \underline{62.9} \\
        \rowcolor{ibotcyan!50}
        \textbf{CODI} (same-image \texttt{[CLS]}) & \textbf{71.3} & \textbf{65.2} \\
        \bottomrule
    \end{tabular}
\end{minipage}

\end{table}

\textbf{How does cross distillation affect downstream performance?} We study the target distribution and mixing strength $\alpha$, with additional dataset sweeps in Appendix~\ref{app:alpha_sensitivity}. As shown in Tab.~\ref{tab:alpha_ablation}, introducing global semantics into the patch target improves dense performance, with the \texttt{[CLS]} token outperforming the averaged patch target. Mixing a uniform distribution instead reduces both classification and segmentation performance. The results also show that $\alpha=0.1$ is an appropriate mixing strength: when $\alpha=0$, CODI reduces to DINOv2; when $\alpha$ is large, performance degrades.

On COCO, $\alpha=0.05$ and $0.075$ achieve the best $k$-NN accuracy, whereas $\alpha=0.1$ gives the best VOC mIoU. Increasing $\alpha$ to $0.2$ reduces both metrics below DINOv2. On ImageNet-22K, excluding ImageNet-1K and using a 100K-iteration schedule, $\alpha=0.1$ achieves the best results on both metrics. These findings support $\alpha=0.1$ as a strong starting point, though not a universal optimum for every downstream metric or evidence of a uniformly wide operating range. The final choice depends on whether image-level or dense performance is prioritized.

\textbf{Global semantic guidance or regularization?} To disentangle global semantic guidance from regularization, we conduct two controlled ablations after 100 epochs of ImageNet-1K pretraining. For \emph{random-image \texttt{[CLS]}}, we keep $\alpha=0.1$, but replace the teacher \texttt{[CLS]} distribution with that of another randomly selected image from the same mini-batch. This preserves a valid teacher \texttt{[CLS]} distribution while breaking image-level semantic alignment. For the \emph{entropy-matched patch target}, we first compute the entropy of the standard CODI target after mixing its patch distribution with the same-image \texttt{[CLS]} distribution at $\alpha=0.1$. We then omit \texttt{[CLS]} mixing and adjust only the patch-target temperature so that the resulting patch distribution matches this entropy. This preserves CODI's target-entropy change while keeping the target entirely local and excluding global \texttt{[CLS]} information.

As shown in Tab.~\ref{tab:semantic_regularization}, the entropy-matched patch control improves VOC mIoU by $+1.5$ over DINOv2 while reducing $k$-NN by $0.3$, showing that matching CODI's target entropy alone provides a regularization benefit. Random-image \texttt{[CLS]} yields a smaller improvement in dense prediction performance and also reduces $k$-NN. In contrast, same-image \texttt{[CLS]} (CODI) improves both metrics and yields the largest mIoU gain. Thus, regularization contributes, but semantically aligned \texttt{[CLS]}-to-patch guidance is the dominant source of improvement.

\begin{wrapfigure}{r}{0.35\textwidth}
    \vspace{-\intextsep}
    \centering
    \includegraphics[width=\linewidth]{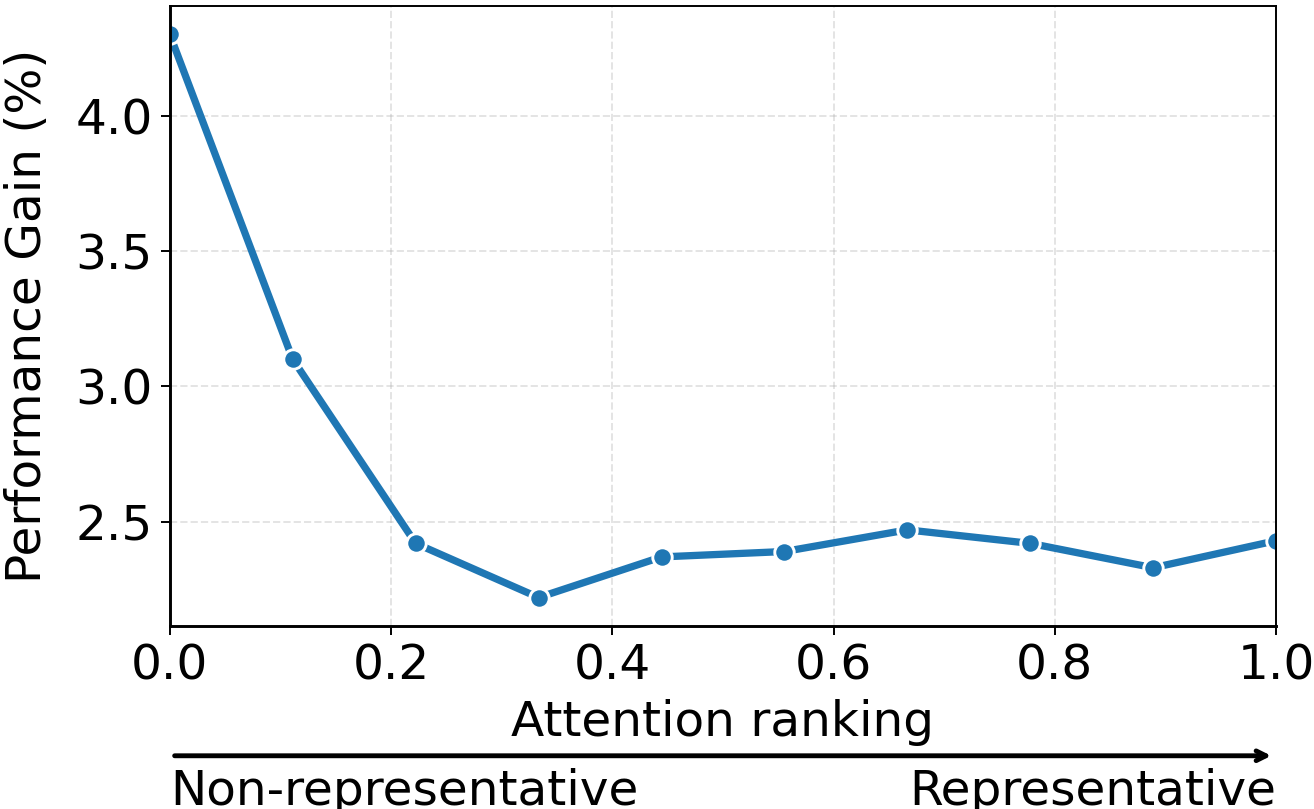}
    \caption{Performance gain by \texttt{[CLS]} attention rank.}
    \label{fig:performance_gain_ranking}
    \vspace{-\intextsep}
\end{wrapfigure}

\textbf{Does the performance gain arise from non-representative patches?} To examine whether the performance improvement comes from improving non-representative patches, we visualize the performance gain on patches with different attention rankings. We rank patches by the final-layer \texttt{[CLS]}-to-patch attention mass averaged over heads, excluding register tokens. Low-ranked patches have relatively low \texttt{[CLS]}-to-patch attention mass and thus correspond to non-representative patches. As shown in Fig.~\ref{fig:performance_gain_ranking}, the performance gain of CODI mainly comes from these non-representative patches, which is consistent with our design goal.

\providecolor{ibotcyan}{cmyk}{.3,0,0,0}
\begin{wraptable}[6]{r}{0.43\textwidth}
    \vspace{-11pt}
    \centering
    \caption{Representation distance analysis.}
    \label{tab:class_separability}
    \resizebox{\linewidth}{!}{
        \begin{tabular}{lccc}
            \toprule
            Methods & Intra-$d$ $\downarrow$ & Inter-$d$ $\uparrow$ & Gap $\uparrow$ \\
            \midrule
            DINOv2 & 0.6821 & \textbf{0.9168} & 0.2347 \\
            \rowcolor{ibotcyan!50}
            CODI & \textbf{0.5864} & 0.8430 & \textbf{0.2566} \\
            \bottomrule
        \end{tabular}
    }
\end{wraptable}

\textbf{How does CODI influence the representation structure?}
Since CODI mixes a small amount of the \texttt{[CLS]} distribution into the patch targets, a natural question arises: although this operation acts like a centering operation in the representation space, why does it improve representation separability instead of confusing representations across classes? To answer this question, we conduct an experiment. Using the pixel labels from Pascal VOC, we first aggregate patch-level labels by majority voting. Then, we compute the inter-class and intra-class distances between patch representations based on the labels. As shown in Tab.~\ref{tab:class_separability}, although both the intra-class and inter-class distances shrink as expected, the gap between them, which measures class separability, increases. These results also show a trade-off between inter-class and intra-class distances, suggesting that we need to choose $\alpha$ carefully to prevent collapse.

\section{Conclusion}
This paper studies the phenomenon that the \texttt{[CLS]} token in ViTs tends to gather information from a set of \textit{representative patches} rather than the whole object. We provide a local gradient analysis showing that, through the final \texttt{[CLS]} attention readout, image-level semantic gradients are weighted by \texttt{[CLS]}-to-patch attention mass. As a result, non-representative patches, i.e., patches with low attention scores, receive weaker image-level semantic gradients through this path, which can contribute to less discriminative patch features. To address this issue, we propose a simple yet effective cross-distillation (CODI) design. By mixing a small amount of teacher class-token information into masked patch targets, CODI ensures that every masked patch supervised by the patch loss receives a target-level global semantic correction. We empirically evaluate CODI on multiple benchmarks and evaluation settings. The results demonstrate its effectiveness and scalability.

\clearpage
\begin{ack}
This work was supported in part by National Natural Science Foundation of China: 62525212, U23B2051, 62236008, 62441232, 62521007, 62411540034, 62132006, and U21B2038, in part by Youth Innovation Promotion Association CAS, in part by the Strategic Priority Research Program of the Chinese Academy of Sciences, Grant No. XDB0680201, in part by the China National Postdoctoral Program for Innovative Talents under Grant BX20250377, in part by the China Postdoctoral Science Foundation under Grant 2026M791710, and in part by the project ZR2025ZD01 supported by Shandong Provincial Natural Science Foundation.

The authors declare no other competing interests.
\end{ack}

\bibliographystyle{plainnat}
\bibliography{neurips_2026}

\newpage

\appendix
\startcontents
\section*{Appendix Contents}
\printcontents{}{1}{\setcounter{tocdepth}{3}}

\newpage
\section{Detailed Theory and Proofs}
\label{app:theory}

This appendix provides the formal statements and proofs underlying Sec.~\ref{subsec:theory} and Sec.~\ref{sec:theory}. We deliberately keep the model local. The goal is not to describe every computational path in a multi-layer ViT, but to analyze one explicit path: the final \texttt{[CLS]} attention readout through which the image-level DINO loss aggregates patch information and sends gradients back to patch tokens.

\subsection{Why a Local \texttt{[CLS]} Readout Is Sufficient for the Argument}
\label{app:local-model}

A full ViT contains multiple self-attention blocks, residual connections, MLPs, normalization layers, and projection heads. Our main claim, however, concerns the following specific question: along the final \texttt{[CLS]} attention row, how is the gradient from the image-level teacher target distributed across patch tokens? This question can be answered exactly with a one-step local readout.

Let \(P\) denote image patch tokens in the final block, and let \(T\) denote all tokens competing in the final \texttt{[CLS]} attention row, including register tokens when used. Let \(a_{\mathrm{cls}}\) be the \texttt{[CLS]} query, and let \(k_i\) and \(v_i\) be the key and value of token \(i\in T\). The attention score, attention mass, and readout are
\begin{equation}
    s_i=\frac{a_{\mathrm{cls}}^\top k_i}{\sqrt d},
    \qquad
    \pi_i=\frac{\exp(s_i)}{\sum_{j\in T}\exp(s_j)},
    \qquad
    z=\sum_{j\in T}\pi_j v_j.
    \label{eq:app-local-readout}
\end{equation}
The output projection can be absorbed into \(v_i\), since it is a shared linear map applied after the weighted sum. During the local calculation, upstream representations, teacher targets, and all other paths are held fixed. Under this convention, for \(i\in P\), the derivatives with respect to \(v_i\) and \(s_i\) measure how the image-level loss allocates its gradient through this attention readout.

This abstraction does not imply that patch tokens receive no gradients from other network components. It only isolates a mechanism that is directly relevant to the representative-patch observation: when the \texttt{[CLS]} loss depends on a softmax-weighted readout, the gradient flowing through that readout is multiplied by the attention mass.

\subsection{Exact Gradients of the \texttt{[CLS]} Attention Readout}
\label{app:exact-gradients}

Treat the teacher class-token target \(q_{\mathrm{cls}}^t\) as fixed, and define
\begin{equation}
    \mathcal{L}_{\mathrm{cls}}
    =
    \mathrm{CE}\left(q_{\mathrm{cls}}^t,\mathrm{softmax}(\phi_s(z)/\tau_s)\right),
    \qquad
    g:=\nabla_z\mathcal{L}_{\mathrm{cls}}.
\end{equation}
For the following proof, the exact form of the head \(\phi_s\) is not important. We only need the gradient of the scalar loss with respect to the local readout.

\begin{proof}[Proof of Theorem~\ref{thm:cls-gating}]
We prove the value-path identity first. In Eq.~\eqref{eq:app-local-readout}, the readout is
\begin{equation}
    z=\sum_{j\in T}\pi_j v_j.
\end{equation}
When differentiating with respect to a value vector \(v_i\), the attention weights \(\pi_j\) are fixed because they depend on the scores \(s_j\), not on the values \(v_j\). Hence
\begin{equation}
    \frac{\partial z}{\partial v_i}=\pi_i I,
\end{equation}
where \(I\) is the identity matrix. By the chain rule,
\begin{equation}
    \nabla_{v_i}\mathcal{L}_{\mathrm{cls}}
    =
    \left(\frac{\partial z}{\partial v_i}\right)^\top
    \nabla_z\mathcal{L}_{\mathrm{cls}}
    =
    \pi_i g.
\end{equation}
This proves the first identity.

We next differentiate with respect to the score \(s_i\). The derivative of the softmax is
\begin{equation}
    \frac{\partial \pi_j}{\partial s_i}
    =
    \pi_j(\mathbf{1}_{i=j}-\pi_i).
\end{equation}
Therefore,
\begin{align}
    \frac{\partial z}{\partial s_i}
    &=
    \sum_{j\in T}
    \frac{\partial \pi_j}{\partial s_i}v_j \\
    &=
    \pi_i(1-\pi_i)v_i
    +
    \sum_{j\neq i}(-\pi_j\pi_i)v_j \\
    &=
    \pi_i v_i
    -
    \pi_i\sum_{j\in T}\pi_j v_j \\
    &=
    \pi_i(v_i-z).
\end{align}
Applying the chain rule again gives
\begin{align}
    \frac{\partial\mathcal{L}_{\mathrm{cls}}}{\partial s_i}
    &=
    \left\langle
    \nabla_z\mathcal{L}_{\mathrm{cls}},
    \frac{\partial z}{\partial s_i}
    \right\rangle \\
    &=
    \pi_i\langle g,v_i-z\rangle.
\end{align}
Defining \(u_i=-\langle g,v_i-z\rangle\), we obtain
\begin{equation}
    -\frac{\partial\mathcal{L}_{\mathrm{cls}}}{\partial s_i}
    =
    \pi_i u_i.
\end{equation}

The same calculation also yields the corresponding ratio identities. For any two patches \(i,r\in P\) with \(g\neq0\), since
\begin{equation}
    \|\nabla_{v_i}\mathcal{L}_{\mathrm{cls}}\|
    =
    \pi_i\|g\|,
\end{equation}
we have
\begin{equation}
    \frac{\|\nabla_{v_i}\mathcal{L}_{\mathrm{cls}}\|}
         {\|\nabla_{v_r}\mathcal{L}_{\mathrm{cls}}\|}
    =
    \frac{\pi_i}{\pi_r}.
\end{equation}
Similarly, when \(u_i,u_r>0\), the negative score gradients are positive and
\begin{equation}
    \frac{-\partial\mathcal{L}_{\mathrm{cls}}/\partial s_i}
         {-\partial\mathcal{L}_{\mathrm{cls}}/\partial s_r}
    =
    \frac{\pi_i u_i}{\pi_r u_r}.
\end{equation}
\end{proof}

\subsection{Attention Amplification of Representative Patches}
\label{app:attention-amplification}

The previous proof gives an instantaneous gradient statement. The next result explains how a set of useful representative patches can gain more attention under local score-gradient flow.

\begin{theorem}[Local attention amplification]
\label{thm:app-attention-amplification}
Consider the score dynamics
\begin{equation}
    \dot s_i
    =
    -\frac{\partial\mathcal{L}_{\mathrm{cls}}}{\partial s_i}
    =
    \pi_i u_i.
\end{equation}
Let \(\mathcal{R}\subseteq\mathcal{P}\) be a set of representative patches, and define its total attention mass
\begin{equation}
    \Pi_{\mathcal{R}}=\sum_{i\in\mathcal{R}}\pi_i.
\end{equation}
If
\begin{equation}
    u_i\ge \gamma>0 \quad \text{for } i\in\mathcal{R},
    \qquad
    u_j\le 0 \quad \text{for } j\notin\mathcal{R},
\end{equation}
then
\begin{equation}
    \dot\Pi_{\mathcal{R}}
    \ge
    (1-\Pi_{\mathcal{R}})\gamma
    \sum_{i\in\mathcal{R}}\pi_i^2
    \ge0.
\end{equation}
\end{theorem}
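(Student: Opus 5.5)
The plan is a direct chain-rule computation. I would differentiate \(\Pi_{\mathcal{R}}\) along the score flow using the softmax Jacobian already used in the proof of Theorem~\ref{thm:cls-gating}, then split the result into a contribution from \(\mathcal{R}\) and one from its complement.

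First, I would recall \(\partial\pi_i/\partial s_j=\pi_i(\mathbf{1}_{i=j}-\pi_j)\) for \(i,j\in T\). This gives
\begin{equation*}
    \dot\pi_i=\pi_i\Bigl(\dot s_i-\sum_{j\in T}\pi_j\dot s_j\Bigr).
\end{equation*}
Summing over \(i\in\mathcal{R}\) yields
\begin{equation*}
    \dot\Pi_{\mathcal{R}}=\sum_{i\in\mathcal{R}}\pi_i\dot s_i-\Pi_{\mathcal{R}}\sum_{j\in T}\pi_j\dot s_j .
\end{equation*}
Next, I would split the second sum into \(j\in\mathcal{R}\) and \(j\in T\setminus\mathcal{R}\) and collect terms:
\begin{equation*}
    \dot\Pi_{\mathcal{R}}=(1-\Pi_{\mathcal{R}})\sum_{i\in\mathcal{R}}\pi_i\dot s_i-\Pi_{\mathcal{R}}\sum_{j\in T\setminus\mathcal{R}}\pi_j\dot s_j .
\end{equation*}

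Then I would substitute the dynamics \(\dot s_i=\pi_i u_i\), which turns the two sums into \(\sum_{i\in\mathcal{R}}\pi_i^2u_i\) and \(\sum_{j\notin\mathcal{R}}\pi_j^2u_j\). For the first term, \(u_i\ge\gamma\) on \(\mathcal{R}\) and \(1-\Pi_{\mathcal{R}}\ge0\) give the lower bound \((1-\Pi_{\mathcal{R}})\gamma\sum_{i\in\mathcal{R}}\pi_i^2\). For the second term, \(u_j\le0\) off \(\mathcal{R}\) and \(\Pi_{\mathcal{R}}\ge0\) make \(-\Pi_{\mathcal{R}}\sum_{j\notin\mathcal{R}}\pi_j^2u_j\) nonnegative, so it can be dropped. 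The final inequality \(\ge0\) is immediate because every factor is nonnegative.

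The main obstacle is bookkeeping rather than analysis. The softmax normalizes over all of \(T\), including register tokens, and not only over the patch set \(\mathcal{P}\). The sign condition \(u_j\le0\) must therefore hold for every competing token outside \(\mathcal{R}\), registers included. I would state explicitly that ``\(j\notin\mathcal{R}\)'' ranges over \(T\setminus\mathcal{R}\); equivalently, registers are covered by the hypothesis. Otherwise a register with \(u_j>0\) could absorb mass and the nonnegativity of the second term would fail.
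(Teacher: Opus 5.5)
Your proposal is correct and follows the paper's proof essentially step for step: you use the softmax time derivative $\dot\pi_i=\pi_i(\dot s_i-\sum_j\pi_j\dot s_j)$, sum over $\mathcal{R}$, substitute $\dot s_i=\pi_i u_i$, split the result into $(1-\Pi_{\mathcal{R}})\sum_{i\in\mathcal{R}}\pi_i^2u_i-\Pi_{\mathcal{R}}\sum_{j\notin\mathcal{R}}\pi_j^2u_j$, and apply the two sign bounds. Your remark that $j\notin\mathcal{R}$ must range over all of $T\setminus\mathcal{R}$, registers included, is exactly what the paper's unrestricted sum over $j$ implicitly requires, so stating it explicitly is a worthwhile clarification.
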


\begin{proof}
We first compute how softmax probabilities change when scores change over time. Since
\begin{equation}
    \pi_i=\frac{\exp(s_i)}{\sum_j\exp(s_j)},
\end{equation}
the time derivative is
\begin{align}
    \dot\pi_i
    &=
    \sum_j
    \frac{\partial \pi_i}{\partial s_j}\dot s_j \\
    &=
    \sum_j
    \pi_i(\mathbf{1}_{i=j}-\pi_j)\dot s_j \\
    &=
    \pi_i\dot s_i
    -
    \pi_i\sum_j\pi_j\dot s_j \\
    &=
    \pi_i\left(\dot s_i-\bar s\right),
\end{align}
where
\begin{equation}
    \bar s:=\sum_j\pi_j\dot s_j
\end{equation}
is the attention-weighted average score velocity.

Now sum over the representative set \(\mathcal{R}\):
\begin{align}
    \dot\Pi_{\mathcal{R}}
    &=
    \sum_{i\in\mathcal{R}}\dot\pi_i \\
    &=
    \sum_{i\in\mathcal{R}}\pi_i\dot s_i
    -
    \left(\sum_{i\in\mathcal{R}}\pi_i\right)
    \left(\sum_j\pi_j\dot s_j\right) \\
    &=
    \sum_{i\in\mathcal{R}}\pi_i\dot s_i
    -
    \Pi_{\mathcal{R}}
    \sum_j\pi_j\dot s_j.
\end{align}
Using \(\dot s_i=\pi_i u_i\), we obtain
\begin{align}
    \dot\Pi_{\mathcal{R}}
    &=
    \sum_{i\in\mathcal{R}}\pi_i^2u_i
    -
    \Pi_{\mathcal{R}}
    \left(
    \sum_{i\in\mathcal{R}}\pi_i^2u_i
    +
    \sum_{j\notin\mathcal{R}}\pi_j^2u_j
    \right) \\
    &=
    (1-\Pi_{\mathcal{R}})
    \sum_{i\in\mathcal{R}}\pi_i^2u_i
    -
    \Pi_{\mathcal{R}}
    \sum_{j\notin\mathcal{R}}\pi_j^2u_j.
\end{align}
By assumption, \(u_i\ge\gamma\) for \(i\in\mathcal{R}\), so
\begin{equation}
    \sum_{i\in\mathcal{R}}\pi_i^2u_i
    \ge
    \gamma\sum_{i\in\mathcal{R}}\pi_i^2.
\end{equation}
Also \(u_j\le0\) outside \(\mathcal{R}\), so
\begin{equation}
    -
    \Pi_{\mathcal{R}}
    \sum_{j\notin\mathcal{R}}\pi_j^2u_j
    \ge0.
\end{equation}
Combining these two inequalities gives
\begin{equation}
    \dot\Pi_{\mathcal{R}}
    \ge
    (1-\Pi_{\mathcal{R}})\gamma
    \sum_{i\in\mathcal{R}}\pi_i^2
    \ge0.
\end{equation}
Thus, under this local sign condition, the total attention mass on representative patches increases.
\end{proof}

\subsection{Sparse Sufficiency of an Image-Level Readout}
\label{app:sparse-sufficiency}

The attention-amplification result shows how representative patches can be reinforced. The next lemma explains why such a sparse explanation can still be sufficient for an image-level loss. Since the loss observes the pooled readout \(z\), it does not directly constrain every patch value to be used.

\begin{lemma}[Sparse sufficiency]
\label{lem:app-sparse-sufficiency}
Assume \(\ell(z)\) is \(L\)-Lipschitz on the convex hull \(\mathrm{conv}\{v_i:i\in\mathcal{P}\}\), and assume \(\|v_i\|\le B\) for all \(i\). Suppose there exists a subset \(\mathcal{R}\subseteq\mathcal{P}\) and a distribution \(\bar\pi\) supported on \(\mathcal{R}\) such that
\begin{equation}
    \ell\!\left(\sum_{i\in\mathcal{R}}\bar\pi_i v_i\right)
    \le
    \min_{\pi\in\Delta_{\mathcal{P}}}
    \ell\!\left(\sum_{i\in\mathcal{P}}\pi_i v_i\right)
    +\varepsilon.
\end{equation}
Then, for any \(\eta>0\), there exists a softmax attention distribution \(\pi^\eta\) satisfying
\begin{equation}
    \sum_{i\notin\mathcal{R}}\pi_i^\eta\le\eta
\end{equation}
and
\begin{equation}
    \ell\!\left(\sum_{i\in\mathcal{P}}\pi_i^\eta v_i\right)
    \le
    \min_{\pi\in\Delta_{\mathcal{P}}}
    \ell\!\left(\sum_{i\in\mathcal{P}}\pi_i v_i\right)
    +
    \varepsilon+2LB\eta.
\end{equation}
\end{lemma}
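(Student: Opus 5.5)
The plan is to build \(\pi^\eta\) directly as a softmax that puts mass at most \(\eta\) outside \(\mathcal{R}\) and is close to \(\bar\pi\) in total variation. Then I will transfer the loss bound using the Lipschitz property. Since softmax distributions have full support, \(\bar\pi\) itself may not be realizable, which is why the construction is a perturbation. Concretely, I will take the mixture
\begin{equation}
    \pi^\eta=(1-\eta)\bar\pi+\eta\,\mu,
\end{equation}
where \(\mu\) is the uniform distribution on \(\mathcal{P}\). Every coordinate is strictly positive, since \(\eta\mu_i>0\) once \(\eta\in(0,1]\); for \(\eta\ge 1\) the claim is trivial with any softmax. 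So \(\pi^\eta\) is realized by the scores \(s_i=\log\pi^\eta_i\). Its mass outside \(\mathcal{R}\) is \(\eta\sum_{i\notin\mathcal{R}}\mu_i\le\eta\), which gives the first claim.

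For the loss bound, write \(z^\eta=\sum_i\pi^\eta_i v_i\) and \(\bar z=\sum_{i\in\mathcal{R}}\bar\pi_i v_i\). Both points lie in \(\mathrm{conv}\{v_i:i\in\mathcal{P}\}\), so the Lipschitz assumption applies to the segment between them. We have
\begin{equation}
    \|z^\eta-\bar z\|=\eta\Bigl\|\sum_i\mu_i v_i-\bar z\Bigr\|\le\eta(B+B)=2B\eta,
\end{equation}
using \(\|v_i\|\le B\) and the triangle inequality on convex combinations. Then \(\ell(z^\eta)\le\ell(\bar z)+2LB\eta\). Combining this with the hypothesis \(\ell(\bar z)\le\min_{\pi\in\Delta_{\mathcal{P}}}\ell(\cdot)+\varepsilon\) yields the stated bound.

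The only delicate point is that the lemma asks for a \emph{softmax} attention distribution, i.e., one with strictly positive entries. The mixture with the uniform distribution handles this, and I will remark that any strictly positive distribution corresponds to finite scores, unique up to an additive constant. A slightly sharper alternative would use the total-variation bound \(\|z^\eta-\bar z\|\le 2B\,\mathrm{TV}(\pi^\eta,\bar\pi)\le 2B\eta\), but the direct computation above already suffices. If the token set \(T\) includes registers, I will either restrict attention to \(\mathcal{P}\) as in the statement or note that register mass can be driven arbitrarily small by the same mixing argument.
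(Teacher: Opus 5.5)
Your proof is correct and follows the same route as the paper. Both arguments perturb $\bar\pi$ into a softmax distribution with at most $\eta$ mass off $\mathcal{R}$, bound the readout shift by $2B\eta$ using $\|v_i\|\le B$, and then apply the Lipschitz property. Your explicit mixture $\pi^\eta=(1-\eta)\bar\pi+\eta\mu$ with scores $s_i=\log\pi^\eta_i$ is slightly more rigorous than the paper's appeal to ``very negative scores outside $\mathcal{R}$'', because it realizes the softmax exactly even when $\bar\pi$ vanishes somewhere on $\mathcal{R}$ or when $\mathcal{R}=\mathcal{P}$.
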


\begin{proof}
Let
\begin{equation}
    z_{\mathcal{R}}
    =
    \sum_{i\in\mathcal{R}}\bar\pi_i v_i.
\end{equation}
By assumption, \(z_{\mathcal{R}}\) is already within \(\varepsilon\) of the best image-level readout obtainable from all patches.

Choose a distribution \(\pi^\eta\) whose mass outside \(\mathcal{R}\) is at most \(\eta\), and whose restriction to \(\mathcal{R}\) is proportional to \(\bar\pi\). Such a distribution can be approximated by softmax scores because assigning very negative scores to patches outside \(\mathcal{R}\) makes their total softmax mass arbitrarily small.

Define
\begin{equation}
    z^\eta
    =
    \sum_{i\in\mathcal{P}}\pi_i^\eta v_i.
\end{equation}
The only difference between \(z^\eta\) and \(z_{\mathcal{R}}\) is that at most \(\eta\) probability mass has been moved from vectors in \(\mathcal{R}\) to vectors outside \(\mathcal{R}\), and then the remaining mass on \(\mathcal{R}\) has been renormalized. Since every vector has norm at most \(B\), the perturbation is bounded by
\begin{equation}
    \|z^\eta-z_{\mathcal{R}}\|
    \le
    2B\eta.
\end{equation}
Because \(\ell\) is \(L\)-Lipschitz,
\begin{equation}
    \ell(z^\eta)
    \le
    \ell(z_{\mathcal{R}})+L\|z^\eta-z_{\mathcal{R}}\|
    \le
    \ell(z_{\mathcal{R}})+2LB\eta.
\end{equation}
Using the assumed \(\varepsilon\)-optimality of \(z_{\mathcal{R}}\), we get
\begin{equation}
    \ell(z^\eta)
    \le
    \min_{\pi\in\Delta_{\mathcal{P}}}
    \ell\!\left(\sum_i\pi_i v_i\right)
    +
    \varepsilon+2LB\eta.
\end{equation}
This proves the lemma.
\end{proof}

\subsection{CODI Target Mixing}
\label{app:codi-proof}

We now prove the logit-level statement used in Sec.~\ref{sec:theory}. The result is independent of the attention readout because it concerns the masked patch prediction head directly.

\begin{proof}[Proof of Theorem~\ref{thm:codi-gradient}]
Use the notation of Theorem~\ref{thm:codi-gradient}, and write \(\mathcal{L}_i^0=\mathcal{L}_{\mathrm{patch}}^0(i)\), \(\mathcal{L}_i^\alpha=\mathcal{L}_{\mathrm{patch}}^\alpha(i)\), and \(\widetilde q_i^t=(1-\alpha)q_i^t+\alpha q_{\mathrm{cls}}^t\).
For any target distribution \(y\), define the softmax cross-entropy
\begin{equation}
    \ell(\xi;y)
    =
    -y^\top\log\mathrm{softmax}(\xi).
\end{equation}
The standard derivative is
\begin{equation}
    \nabla_{\xi}\ell(\xi;y)
    =
    \mathrm{softmax}(\xi)-y.
\end{equation}
Applying this identity to the original patch target \(q_i^t\), we obtain
\begin{equation}
    -\nabla_{\xi_i}\mathcal{L}_i^0
    =
    q_i^t-p_i^s.
\end{equation}
Applying it to the mixed target \(\widetilde q_i^t\), we get
\begin{align}
    -\nabla_{\xi_i}\mathcal{L}_i^\alpha
    &=
    \widetilde q_i^t-p_i^s \\
    &=
    (1-\alpha)q_i^t+
    \alpha q_{\mathrm{cls}}^t-p_i^s \\
    &=
    q_i^t-p_i^s+
    \alpha(q_{\mathrm{cls}}^t-q_i^t).
\end{align}
Thus, compared with the original patch target, CODI adds the term
\begin{equation}
    \alpha(q_{\mathrm{cls}}^t-q_i^t)
\end{equation}
directly at the patch logits. This term is defined by the two teacher target distributions and does not involve the \texttt{[CLS]} attention mass \(\pi_i\).
\end{proof}

\subsection{Register Tokens as Softmax Competitors}
\label{app:registers}

The main text includes register tokens as softmax competitors through \(T\). Equivalently, if \(\mathcal{G}\) is the set of register indices, the local readout becomes
\begin{equation}
    z
    =
    \sum_{i\in\mathcal{P}}\pi_i v_i
    +
    \sum_{r\in\mathcal{G}}\pi_r v_r,
    \qquad
    \sum_{i\in\mathcal{P}}\pi_i+
    \sum_{r\in\mathcal{G}}\pi_r=1.
\end{equation}
The same gradient identities apply to every token in the enlarged softmax. Let
\begin{equation}
    \Pi_{\mathcal{G}}
    =
    \sum_{r\in\mathcal{G}}\pi_r
\end{equation}
be the total attention mass assigned to registers. Then the value-path gradients satisfy
\begin{equation}
    \sum_{i\in\mathcal{P}}
    \|\nabla_{v_i}\mathcal{L}_{\mathrm{cls}}\|
    =
    (1-\Pi_{\mathcal{G}})\|g\|,
    \qquad
    \sum_{r\in\mathcal{G}}
    \|\nabla_{v_r}\mathcal{L}_{\mathrm{cls}}\|
    =
    \Pi_{\mathcal{G}}\|g\|.
\end{equation}
Registers can therefore be useful for storing global or internal computation, but in this local readout they also participate in the same softmax competition as patch tokens. In contrast, the CODI correction in Appendix~\ref{app:codi-proof} is applied to masked patch logits, so it is not reduced when the \texttt{[CLS]} readout assigns attention to registers.

\clearpage
\section{Pseudo Code}
\begin{algorithm}[h]
\caption{Self-Distillation with CODI Target Mixing}
\label{alg:codi}
\small
\begin{algorithmic}[1]
\Require Dataset $\mathcal{D}$; student $F_\theta=(f_\theta,h_\theta)$; EMA teacher $F_\xi=(f_\xi,h_\xi)$
\Require Student views $\mathcal{U}$; teacher views $\mathcal{G}$; masked patch sampler $\mathcal{M}$; mixing ratio $\alpha$

\For{each training iteration $t$}
    \State Sample images $x\sim\mathcal{D}$ and generate augmented views
    \State Sample masked patch positions $\mathcal{M}(u)$ for masked student views $u\in\mathcal{U}$
    \State Get teacher temperature $\tau_t$ and EMA momentum $m_t$

    \Statex \textbf{Teacher targets, no gradient}
    \For{each teacher view $v\in\mathcal{G}$}
        \State $(z^{T,\mathrm{cls}}_v,\{z^{T,\mathrm{p}}_{v,i}\}_{i=1}^{N})
            \gets f_\xi(v)$
        \State $q^{\mathrm{cls}}_v
            \gets \mathrm{softmax}((h_\xi(z^{T,\mathrm{cls}}_v)-c_{\mathrm{cls}})/\tau_t)$
        \For{each patch position $i$}
            \State $q^{\mathrm{p}}_{v,i}
                \gets \mathrm{softmax}((h_\xi(z^{T,\mathrm{p}}_{v,i})-c_{\mathrm{patch}})/\tau_t)$
        \EndFor
    \EndFor
    \For{each paired masked student view $u$ and teacher view $v$}
        \For{each masked patch $i\in\mathcal{M}(u)$}
            \State $\widetilde q^{\mathrm{p}}_{v,i}
                \gets (1-\alpha)q^{\mathrm{p}}_{v,i}+\alpha q^{\mathrm{cls}}_v$
        \EndFor
    \EndFor
    \State Update centers $c_{\mathrm{cls}}$ and $c_{\mathrm{patch}}$ from unmixed teacher logits

    \Statex \textbf{Student forward and losses}
    \For{each student view $u\in\mathcal{U}$}
        \State $(z^{S,\mathrm{cls}}_u,\{z^{S,\mathrm{p}}_{u,i}\}_{i=1}^{N})
            \gets f_\theta(u,\mathcal{M}(u))$
        \State $s^{\mathrm{cls}}_u\gets h_\theta(z^{S,\mathrm{cls}}_u)$
        \State $s^{\mathrm{p}}_{u,i}\gets h_\theta(z^{S,\mathrm{p}}_{u,i})$ for $i\in\mathcal{M}(u)$
    \EndFor
    \State $\mathcal{L}_{\mathrm{DINO}}
        \gets \sum_{(u,v)\in\mathcal{V}_{\mathrm{cls}}}
        \mathrm{CE}(q^{\mathrm{cls}}_v,s^{\mathrm{cls}}_u)$
    \State $\mathcal{L}_{\mathrm{CODI}}
        \gets \sum_{(u,v)\in\mathcal{V}_{\mathrm{patch}}}
        \frac{1}{|\mathcal{M}(u)|}\sum_{i\in\mathcal{M}(u)}
        \mathrm{CE}(\widetilde q^{\mathrm{p}}_{v,i},s^{\mathrm{p}}_{u,i})$
    \State $\mathcal{L}\gets\mathcal{L}_{\mathrm{DINO}}+\lambda\mathcal{L}_{\mathrm{CODI}}$

    \State Update student parameters $\theta$ using $\nabla_\theta\mathcal{L}$
    \State $\xi \gets m_t\xi+(1-m_t)\theta$
\EndFor

\Statex \textit{$\mathcal{V}_{\mathrm{cls}}$ and $\mathcal{V}_{\mathrm{patch}}$ denote the usual cross-view class-token and masked-patch distillation pairs.}
\Statex \textit{$\mathrm{CE}(q,s)=-\sum_k q_k\log\mathrm{softmax}(s/\tau_s)_k$.}
\end{algorithmic}
\end{algorithm}

\section{Detailed Settings}
\subsection{Pretraining Settings}
\label{app:pretrain_setting}
\paragraph{Pretraining protocol.}
Unless otherwise specified, we pretrain on ImageNet-1K with a DINOv2/iBOT-style teacher-student framework. The default
run uses ViT-S/14 backbones with 4 register tokens and a shared DINO/iBOT projection head. For the backbone, LayerScale
is set to $1\times10^{-5}$, and uniform stochastic depth is set to $0.1$. For the projection head, we use 65,536 prototypes,
hidden dimension 2048, bottleneck dimension 256, and 3 MLP layers. Each image is augmented into two $224\times224$
global crops sampled from scale $[0.32,1.0]$ and eight $98\times98$ local crops sampled from scale $[0.05,0.32]$,
together with the standard DINOv2/iBOT augmentations: bicubic random resized crop, horizontal flip, color jitter,
grayscale conversion, Gaussian blur, solarization, and ImageNet normalization. We train for 800 epochs on 8 GPUs with
batch size 128 per GPU, giving a global batch size of 1024 and 1250 iterations per epoch.
For compute resources, the ViT-S/14 experiments are run on 8 NVIDIA RTX 4090 GPUs, and the 800-epoch pretraining run
takes about 7 days. The ViT-B/14 and ViT-L/14 experiments are run on 8 NVIDIA A100 GPUs.

For optimization, we use AdamW with $(\beta_1,\beta_2)=(0.9,0.999)$ and cosine learning-rate decay from the sqrt-scaled
base learning rate $2\times10^{-3}$ to $1\times10^{-6}$ after 10 warmup epochs. Weight decay follows a cosine schedule
from 0.04 to 0.4. We clip gradients at 3.0, freeze the last layer for one epoch, set the patch-embedding learning-rate
multiplier to 0.2, and use layer-wise learning-rate decay 0.9. The teacher is updated with EMA, whose momentum is
cosine-scheduled from 0.992 to 1.0. The teacher temperature is warmed from 0.04 to 0.07 over 30 epochs, while the student
temperature is 0.1 and the centering momentum is 0.9.

For the training losses, DINO and iBOT are both weighted by 1.0, KoLeo regularization uses a weight of 0.1, and prototype
coding regularization is disabled, while iBOT masks 50\% of global-view samples with mask ratios in $[0.1,0.5]$. CODI keeps
the protocol unchanged except for the patch target: for each masked teacher patch, we mix the teacher class-token
distribution into the original patch-token target with a class-to-patch ratio of 0.1.

\subsection{Evaluation Settings}
\label{app:eval_setting}
\phantomsection
\label{app:eval_cls_frozen}
\noindent\textbf{ImageNet $k$-NN.}
We evaluate image-level frozen representations with weighted $k$-NN classification on ImageNet-1K. The training split is
used as the retrieval bank, and each validation image is classified by a temperature-weighted vote over its nearest
training features. To make the comparison deterministic, both bank and query images are resized on the shorter side to 256,
center-cropped to $224 \times 224$, converted to tensors, and normalized with ImageNet statistics. We use the last-block
CLS feature, apply $\ell_2$ normalization, and evaluate with global batch size 512. The standard setting keeps the first
nearest neighbor, uses no patch masking, sweeps $k \in \{10,20,100,200\}$ at temperature $\tau=0.07$, and reports the
best top-1/top-5 accuracy over the sweep.

\noindent\textbf{ImageNet linear.}
We train linear classifiers on top of frozen ImageNet-1K backbones, optimizing only the classification heads. The probe
uses the last-block CLS feature without average pooling. Training images are randomly resized and cropped to
$224 \times 224$, horizontally flipped with probability 0.5, and normalized with ImageNet statistics; validation images
are resized to 256 and center-cropped to $224 \times 224$. We train the heads for all candidate learning rates jointly
for 10 epochs, or 12,500 updates, with SGD, momentum 0.9, zero weight decay, cosine learning-rate decay, and global batch
size 1024. The base learning-rate candidates are
$\{1\times10^{-4},2\times10^{-4},5\times10^{-4},1\times10^{-3},2\times10^{-3},5\times10^{-3},1\times10^{-2},
2\times10^{-2}\}$, scaled in implementation by global batch size divided by 256. We select the head with the best final
validation top-1 accuracy and report its top-1/top-5 accuracy.

\noindent\textbf{ImageNet full fine-tuning.}
For full fine-tuning, we update the entire backbone together with a randomly initialized linear classification head. We
use an iBOT-style 100-epoch recipe with 20 warmup epochs, repeated augmentation, global training batch size 512, and
global evaluation batch size 1024. Training images are randomly resized and cropped to $224 \times 224$, horizontally
flipped with probability 0.5, processed by RandAugment with 2 operations and magnitude 9, normalized with ImageNet
statistics, and randomly erased with probability 0.25; evaluation uses resizing to 256 followed by center cropping to
$224 \times 224$. Optimization uses AdamW with learning rate $5\times10^{-4}$, minimum learning rate $1\times10^{-6}$,
weight decay 0.05, betas $(0.9,0.999)$, epsilon $1\times10^{-8}$, layer-wise learning-rate decay 0.65, and drop-path
rate 0.1. Mixup and CutMix use $\alpha=0.8$ and $\alpha=1.0$, respectively, with mixup/CutMix probability 1.0, switch
probability 0.5, and label smoothing 0.1. These hyperparameters are not tuned; we report the top-1/top-5 accuracy of the
best validation top-1 checkpoint.

\phantomsection
\label{app:eval_seg}
\noindent\textbf{Linear semantic segmentation.}
We evaluate dense transfer with a frozen-backbone linear segmentation probe on COCO, ADE20K, and Pascal VOC\@. For ViT
backbones, the probe reads normalized final-block patch tokens and predicts per-pixel labels with a $1\times1$
convolution trained by cross-entropy loss. Training images are randomly resized and cropped to $336 \times 336$ with
scale range $[0.8,1.0]$, horizontally flipped with probability 0.5, and normalized with ImageNet statistics; validation
images are resized to $448 \times 448$ and normalized. Each probe is trained for 2000 steps with global batch size 256
using Adam with betas $(0.9,0.999)$ and zero weight decay. We choose the learning rate independently for each dataset by
validation mIoU from
$\{1\times10^{-4},3\times10^{-4},8\times10^{-4},1\times10^{-3},2\times10^{-3},3\times10^{-3},1\times10^{-2}\}$, then
report mIoU and pixel accuracy.

\noindent\textbf{Linear depth estimation.}
We evaluate geometric transfer with a frozen-backbone linear depth probe. Dense patch features are passed to a depth head
with optional batch normalization followed by a $1\times1$ convolution that predicts 256 depth bins, which are converted
to metric depth by linear binning and linear normalization over bins. The head is trained with SigLoss for 38,400 steps
using global batch size 16, AdamW, weight decay $1\times10^{-4}$, betas $(0.9,0.99)$, gradient clipping at 35, and a
warmup-one-cycle schedule with 12,800 warmup iterations and final division factor 1000. We select the head learning rate
by validation RMSE from $\{1\times10^{-4},2\times10^{-4},3\times10^{-4},5\times10^{-4},1\times10^{-3}\}$.
Dataset-specific settings follow the standard crops and masks: NYUv2 uses random crop $416\times544$, max depth 10m, and
the NYU Eigen evaluation mask; KITTI uses random crop $352\times704$, max depth 80m, and the KITTI Garg evaluation mask;
SUNRGBD transfer trains on NYUv2 and evaluates on SUNRGBD with max depth 10m. We report RMSE, absolute relative error,
and $\delta_1$.

\phantomsection
\label{app:eval_vos}
\noindent\textbf{Video object segmentation.}
We evaluate video object segmentation by propagating masks over frozen dense features. The first-frame annotation
provides the only supervision, and later masks are inferred from affinities to the first frame and a queue of previous
predicted frames. Features are $\ell_2$-normalized before affinity computation; affinities are temperature-scaled,
optionally restricted by a spatial neighborhood, sparsified by top-$k$ source patches, and then used to propagate one-hot
masks. We select hyperparameters on a 15-video DAVIS train holdout sampled with seed 0, using J\&F-Mean as the selection
metric, and keep the selected setting unchanged for final evaluation. The holdout candidates are listed in
Tab.~\ref{tab:vos_hparam_candidates} following DINOv3 \cite{dinov3}. Unless otherwise stated, all runs use a frame short-side resolution of 480, nearest frame
alignment, and matrix multiplication chunk size 2048.

\begin{table}[h]
\centering
\caption{VOS hyperparameter candidates used in the DAVIS holdout search.}
\label{tab:vos_hparam_candidates}
\begin{tabular}{ccccc}
\toprule
Top-$k$ & Last frames & Neighborhood & Mask shape & Temperature \\
\midrule
5 & 7  & 12  & square & 0.2 \\
5 & 7  & 12  & circle & 0.2 \\
5 & 7  & 5   & square & 0.2 \\
5 & 7  & 24  & square & 0.2 \\
5 & 7  & $\infty$ & none & 0.2 \\
5 & 7  & 12  & square & 0.01 \\
5 & 7  & 12  & square & 0.1 \\
5 & 7  & 12  & square & 0.7 \\
5 & 4  & 12  & square & 0.2 \\
5 & 10 & 12  & square & 0.2 \\
5 & 15 & 12  & square & 0.2 \\
3 & 7  & 12  & square & 0.2 \\
10 & 7 & 12  & square & 0.2 \\
15 & 7 & 12  & square & 0.2 \\
10 & 15 & 12 & circle & 0.1 \\
10 & 15 & 24 & circle & 0.1 \\
10 & 15 & 36 & circle & 0.1 \\
10 & 15 & $\infty$ & none & 0.1 \\
\bottomrule
\end{tabular}
\end{table}

\section{Additional Experiments}
\label{app:exps}

\subsection{Unsupervised Semantic Segmentation}
\label{app:stego}
We evaluate CODI and the baseline representations under the STEGO framework~\citep{stego} on COCO-Stuff-27. As shown in Tab.~\ref{tab:stego}, CODI outperforms DINOv2 by $+1.6$ and $+1.2$ mIoU in the unsupervised and linear-probing settings, respectively. Despite using a larger patch size, CODI also outperforms STEGO with DINO ViT-S/8 by $+1.1$ mIoU in unsupervised segmentation, demonstrating that it learns a stronger dense representation structure.
\begin{table}[htbp]
    \centering
    \caption{COCO-Stuff-27 validation mIoU under the STEGO protocol.}
    \label{tab:stego}
    \begin{tabular}{lrr}
        \toprule
        Method & Unsupervised mIoU (\%) & Linear-probe mIoU (\%) \\
        \midrule
        PiCIE+H~\citep{cho2021picie} & 14.4 & 14.8 \\
        STEGO (DINO ViT-S/8) & 24.5 & 38.3 \\
        DINOv2 & 24.0 & 39.4 \\
        CODI & \textbf{25.6} & \textbf{40.6} \\
        \bottomrule
    \end{tabular}
\end{table}

\subsection{Object Detection and Instance Segmentation}
\label{app:detection}
We evaluate our models on COCO object detection and instance segmentation following the DINO and iBOT evaluation protocols~\citep{dino,ibot}. As shown in Tab.~\ref{tab:detection}, CODI achieves the best performance across all reported metrics, indicating that its pretrained representation provides a stronger initialization for object detection and instance segmentation.
\begin{table}[htbp]
    \centering
    \caption{Object detection and instance segmentation performance on COCO.}
    \label{tab:detection}
    \setlength{\tabcolsep}{4pt}
    \begin{tabular}{llrrrrrr}
        \toprule
        Method & Architecture & $\mathrm{AP}^{b}$ & $\mathrm{AP}^{b}_{50}$ & $\mathrm{AP}^{b}_{75}$ & $\mathrm{AP}^{m}$ & $\mathrm{AP}^{m}_{50}$ & $\mathrm{AP}^{m}_{75}$ \\
        \midrule
        Supervised & Swin Transformer & 48.1 & 67.1 & 52.5 & 41.7 & 64.4 & 45.0 \\
        MoBY~\citep{xie2021self} & Swin Transformer & 48.1 & 67.1 & 52.1 & 41.5 & 64.0 & 44.7 \\
        Supervised & ViT-S/16 & 46.2 & 65.9 & 49.6 & 40.1 & 62.9 & 42.8 \\
        DINO & ViT-S/16 & 48.4 & 67.6 & 52.4 & 42.0 & 64.8 & 45.1 \\
        iBOT & ViT-S/16 & 49.4 & 68.7 & 53.3 & 42.6 & 65.6 & 45.8 \\
        DINOv2 & ViT-S/14 & 49.9 & 69.0 & 53.8 & 43.0 & 66.0 & 46.4 \\
        CODI (ours) & ViT-S/14 & \textbf{50.4} & \textbf{69.4} & \textbf{54.5} & \textbf{43.6} & \textbf{66.7} & \textbf{47.2} \\
        \bottomrule
    \end{tabular}
\end{table}

\subsection{Results on Smaller Models}
\label{app:tiny}
To validate CODI on smaller models, we pretrain a ViT-T/14 model, where T denotes Tiny, on ImageNet-1K for 200 epochs. As shown in Tab.~\ref{tab:tiny}, CODI outperforms DINOv2 by $+0.3$ in ImageNet classification and $+3.2$ mIoU in VOC segmentation, demonstrating that CODI generalizes to smaller models.
\begin{table}[htbp]
    \centering
    \caption{ViT-T/14 results after 200 epochs of ImageNet-1K pretraining.}
    \label{tab:tiny}
    \begin{tabular}{lrr}
        \toprule
        Method & ImageNet $k$-NN (\%) & VOC mIoU (\%) \\
        \midrule
        DINOv2 & 63.2 & 51.6 \\
        CODI & \textbf{63.5} & \textbf{54.8} \\
        \bottomrule
    \end{tabular}
\end{table}

\subsection{Comparison with DINOv3 and SigLIP2}
\label{app:reference_models}
For the aligned Gram-anchoring comparison in Sec.~\ref{subsec:gram_anchoring}, we tune the resolution in $\{224\times224,448\times448\}$, whether to distill from the previous best or final checkpoint, and whether to update the Gram-anchor teacher, and report the best result. These results are obtained by fine-tuning a DINOv2 checkpoint for 100K iterations with DINOv3's Gram anchoring.

We evaluate the distilled DINOv3~\citep{dinov3} and SigLIP2~\citep{siglip2} checkpoints under the same frozen-feature classification and semantic segmentation protocols and compare them with CODI ViT-Large in Tab.~\ref{tab:reference_models}. These results are provided for reference, with the architecture and pretraining dataset reported for each model. CODI outperforms SigLIP2 by $+6.1$ mIoU while using substantially less data and no distillation from a larger model, highlighting the potential of our design.
\begin{table}[htbp]
    \centering
    \caption{Reference comparison with distilled DINOv3 and SigLIP2 checkpoints. Pretraining data and architectures differ across models.}
    \label{tab:reference_models}
    \setlength{\tabcolsep}{4pt}
    \begin{tabular}{lllrrr}
        \toprule
        Method & Architecture & Dataset & Epochs & ImageNet & VOC \\
               & & & & $k$-NN (\%) & mIoU (\%) \\
        \midrule
        DINOv3 (distilled) & ViT-L/16 & LVD-1689M & -- & \textbf{85.3} & \textbf{85.1} \\
        SigLIP2 (distilled) & ViT-L/16 & WebLI-10B & -- & 84.1 & 74.0 \\
        CODI & ViT-L/14 & IN1K & 200 & 81.2 & 80.1 \\
        \bottomrule
    \end{tabular}
\end{table}

We also pretrain DINOv3 from scratch on ImageNet-1K using RoPE, a patch size of 16, and DINOv3's improved schedule and LayerNorm design. After 100 epochs, DINOv3 ViT-S/16 obtains 70.9 ImageNet $k$-NN accuracy and 61.3 VOC mIoU, compared with 71.1 and 61.4 for DINOv2 ViT-S/14. We hypothesize that these DINOv3 design choices, developed for large-scale pretraining, require further hyperparameter tuning for small models trained from scratch on datasets at the scale of ImageNet-1K, and leave this adaptation to future work.

\subsection{Additional Experiments on Model Robustness}
We further evaluate the robustness of CODI under patch-level perturbations. Specifically, we measure ImageNet $k$-NN classification accuracy under three complementary stress tests that remove representative evidence, remove random visual evidence, or disrupt spatial structure.

\noindent\textbf{Robustness on Attention-based Patch Occlusion.} In this setting, we first compute the \texttt{[CLS]}-to-patch attention score for each patch. We then mask the patches with the highest attention scores and use the $k$-NN classifier to evaluate the resulting representations. This protocol tests whether the model can preserve recognition performance when the most representative patches are not visible.

\noindent\textbf{Robustness on Random Patch Occlusion.} Instead of masking high-attention patches, we also evaluate the model when randomly selected patches are masked. This protocol measures robustness to input-agnostic information loss.

\noindent\textbf{Robustness on Random Patch Shuffling.} Following \citep{naseer2021intriguing}, we shuffle the image using different grid sizes. A grid size of 1 keeps the image unchanged, while a grid size of 196 shuffles every individual patch. This protocol measures how well the model preserves classification performance when spatial structure is disrupted.

\begin{figure}[t]
    \centering
    \begin{subfigure}[t]{0.32\linewidth}
        \centering
        \includegraphics[width=\linewidth]{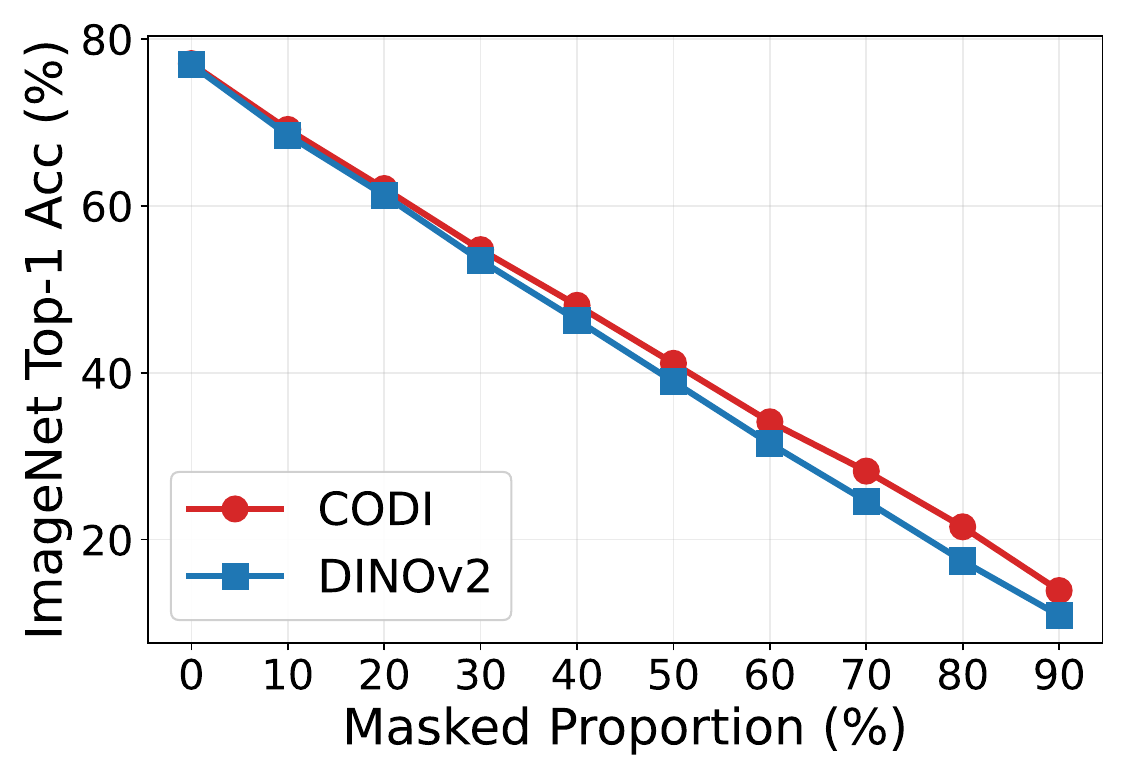}
        \caption{Attention occlusion}
        \label{fig:model_robustness_attention}
    \end{subfigure}
    \hfill
    \begin{subfigure}[t]{0.32\linewidth}
        \centering
        \includegraphics[width=\linewidth]{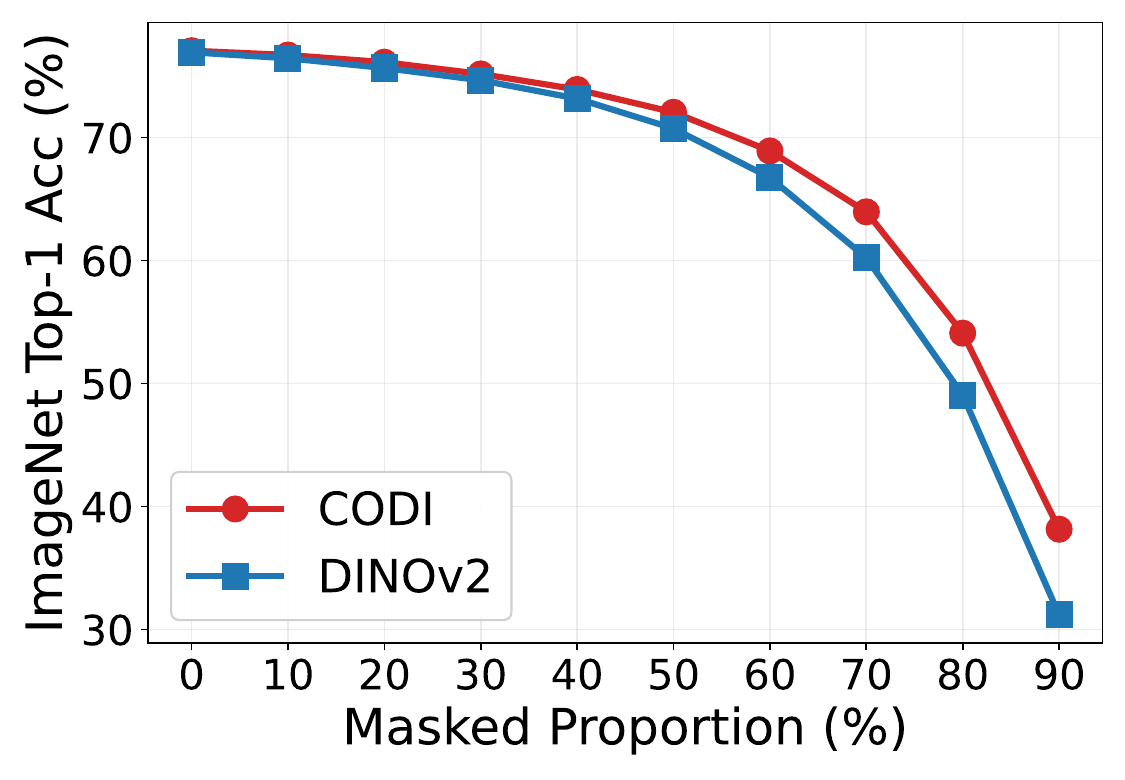}
        \caption{Random occlusion}
        \label{fig:model_robustness_random}
    \end{subfigure}
    \hfill
    \begin{subfigure}[t]{0.32\linewidth}
        \centering
        \includegraphics[width=\linewidth]{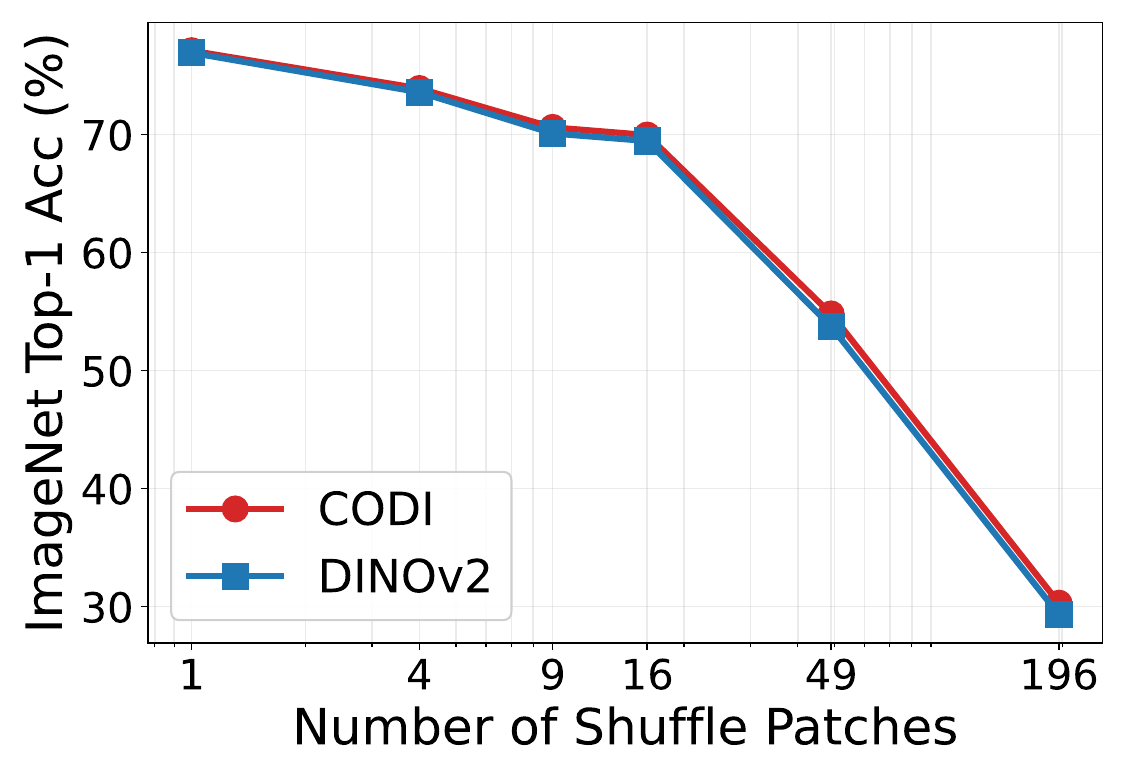}
        \caption{Patch shuffling}
        \label{fig:model_robustness_shuffle}
    \end{subfigure}
    \caption{Robustness analysis on ImageNet $k$-NN classification under patch-level perturbations.}
    \label{fig:model_robustness}
\end{figure}

As shown in Fig.~\ref{fig:model_robustness}, CODI consistently improves robustness across all three stress tests. Under attention-based occlusion, CODI retains a clear advantage when high-attention patches are removed, indicating that its classification representation relies less on a small set of representative patches. Under random occlusion, the advantage becomes larger at high masking ratios, showing that CODI better exploits distributed patch evidence when the image is severely incomplete. Under patch shuffling, CODI remains slightly but consistently stronger as spatial order is increasingly disrupted. These results suggest that CODI learns more distributed and perturbation-tolerant image representations, rather than improving only the clean-input setting.

\subsection{Sensitivity to the Mixing Ratio Across Pretraining Datasets}
\label{app:alpha_sensitivity}
We conduct additional ablations on $\alpha$ when pretraining on COCO and ImageNet-22K. For COCO, we pretrain for 800 epochs. For ImageNet-22K, we exclude the ImageNet-1K subset to avoid overlap with the downstream classification benchmark and use a 100K-iteration schedule. We evaluate ImageNet-1K $k$-NN classification and VOC semantic segmentation for both pretraining datasets.
\begin{table}[htbp]
    \centering
    \caption{Sensitivity to $\alpha$ across pretraining datasets. ImageNet-22K excludes the ImageNet-1K subset. All downstream results are percentages.}
    \label{tab:alpha_sensitivity}
    \begin{tabular}{lrrr}
        \toprule
        Method & $\alpha$ & ImageNet $k$-NN & VOC mIoU \\
        \midrule
        \multicolumn{4}{l}{\textit{COCO pretraining: 800 epochs}} \\
        DINOv2 & -- & 41.1 & 49.2 \\
        CODI & 0.05 & \textbf{41.8} & 50.8 \\
        CODI & 0.075 & \textbf{41.8} & 50.9 \\
        CODI & 0.1 & 41.6 & \textbf{51.3} \\
        CODI & 0.2 & 40.9 & 48.4 \\
        \midrule
        \multicolumn{4}{l}{\textit{ImageNet-22K pretraining: 100K iterations}} \\
        DINOv2 & -- & 63.2 & 66.1 \\
        CODI & 0.05 & 63.5 & 67.9 \\
        CODI & 0.1 & \textbf{63.8} & \textbf{68.2} \\
        CODI & 0.2 & 63.4 & 67.2 \\
        \bottomrule
    \end{tabular}
\end{table}

As shown in Tab.~\ref{tab:alpha_sensitivity}, on COCO, $\alpha=0.075$ matches the $k$-NN performance obtained with $\alpha=0.05$, while its segmentation performance lies between those obtained with $\alpha=0.05$ and $\alpha=0.1$, rather than improving both metrics. This indicates a stable range of $[0.05,0.1]$, with the final choice depending on whether image-level or dense performance is prioritized. Increasing $\alpha$ to $0.2$ reduces both metrics below the DINOv2 baseline. On ImageNet-22K, $\alpha=0.1$ achieves the best results on both metrics. These findings support $\alpha=0.1$ as a strong starting point, though not a universal optimum for every downstream metric; CODI does not have a uniformly wide operating range.

\section{Qualitative Results}
\label{app:qualitative}
We provide additional qualitative results to complement the quantitative dense-prediction evaluations. We group the visualizations by task: semantic segmentation visualizes the quality of learned patch semantics, depth estimation examines whether the learned dense features preserve spatial structure, and sparse correspondence examines local feature alignment across images.

\noindent\textbf{Semantic Segmentation Visualizations.}
Fig.~\ref{fig:app_seg_grid} provides additional frozen-backbone linear segmentation results. The baseline predictions often contain fragmented object regions or background activations. CODI produces more coherent object masks in these examples, indicating that cross distillation improves the semantic quality of patch representations.

\begin{figure}[t]
    \centering
    \includegraphics[width=\linewidth]{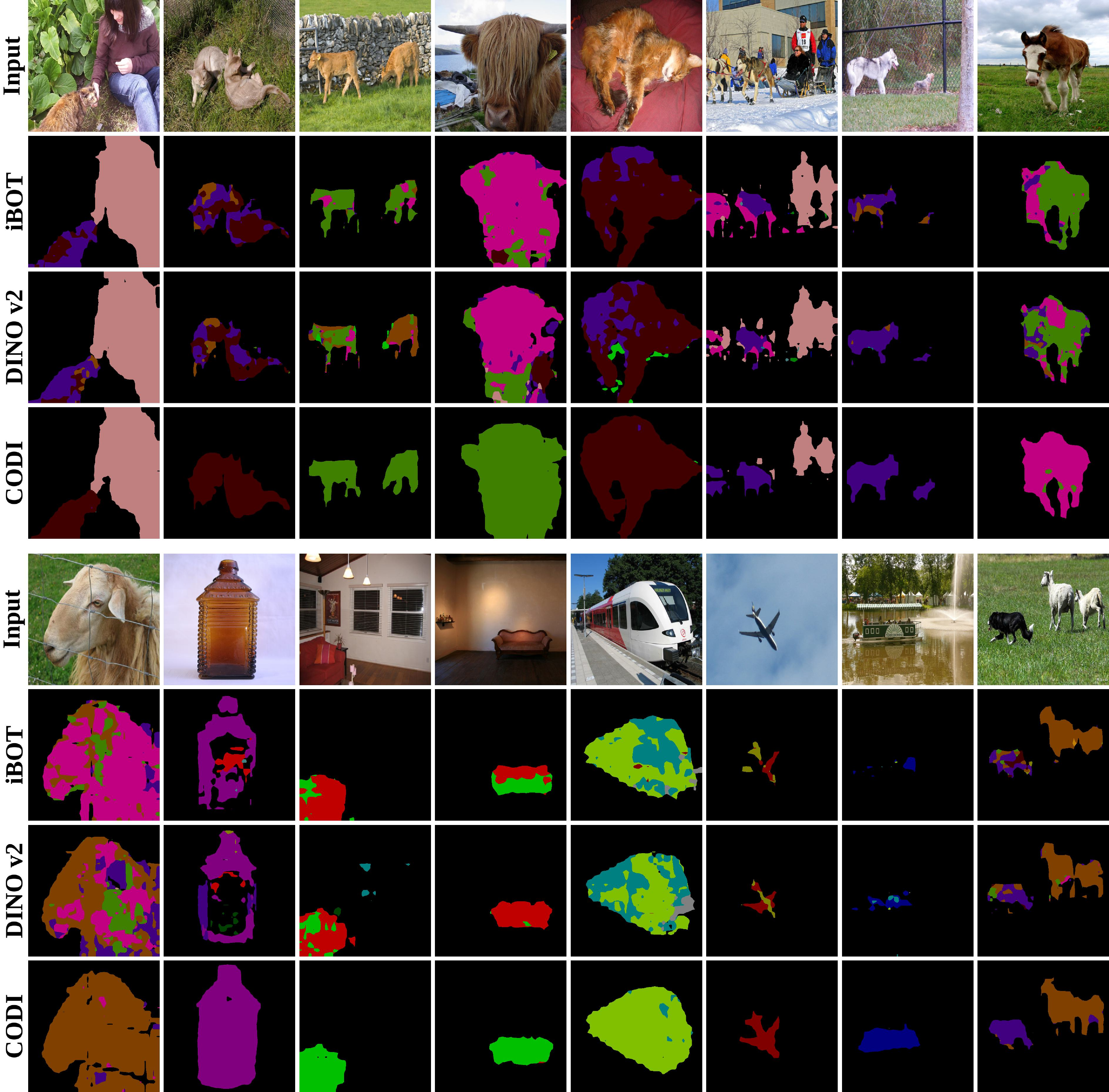}
    \caption{Additional semantic segmentation visualizations from frozen-backbone linear probes. For each block, rows from top to bottom show the input image, iBOT, DINOv2, and CODI, while columns show different examples. Colored regions denote predicted semantic labels. CODI produces more complete and less fragmented object regions in these cases.}
    \label{fig:app_seg_grid}
\end{figure}

\noindent\textbf{Depth Estimation Visualizations.}
Fig.~\ref{fig:app_depth_grid} shows additional depth predictions from frozen dense representations. Although depth estimation is less directly tied to discriminative semantics than segmentation, the visualizations show that CODI preserves coherent indoor layouts and object boundaries. This suggests that the additional class-token guidance does not sacrifice local geometric structure when improving dense semantic transfer. 

\noindent\textbf{Sparse Correspondence Visualizations.}
We further include sparse correspondence results in Fig.~\ref{fig:app_correspondence} to examine whether local features remain aligned across different images of related concepts. The matched points often connect semantically consistent object parts despite changes in instance, viewpoint, scale, and background, indicating that CODI learns patch representations that support transferable local matching.

\begin{figure}[p]
    \centering
    \includegraphics[width=\linewidth]{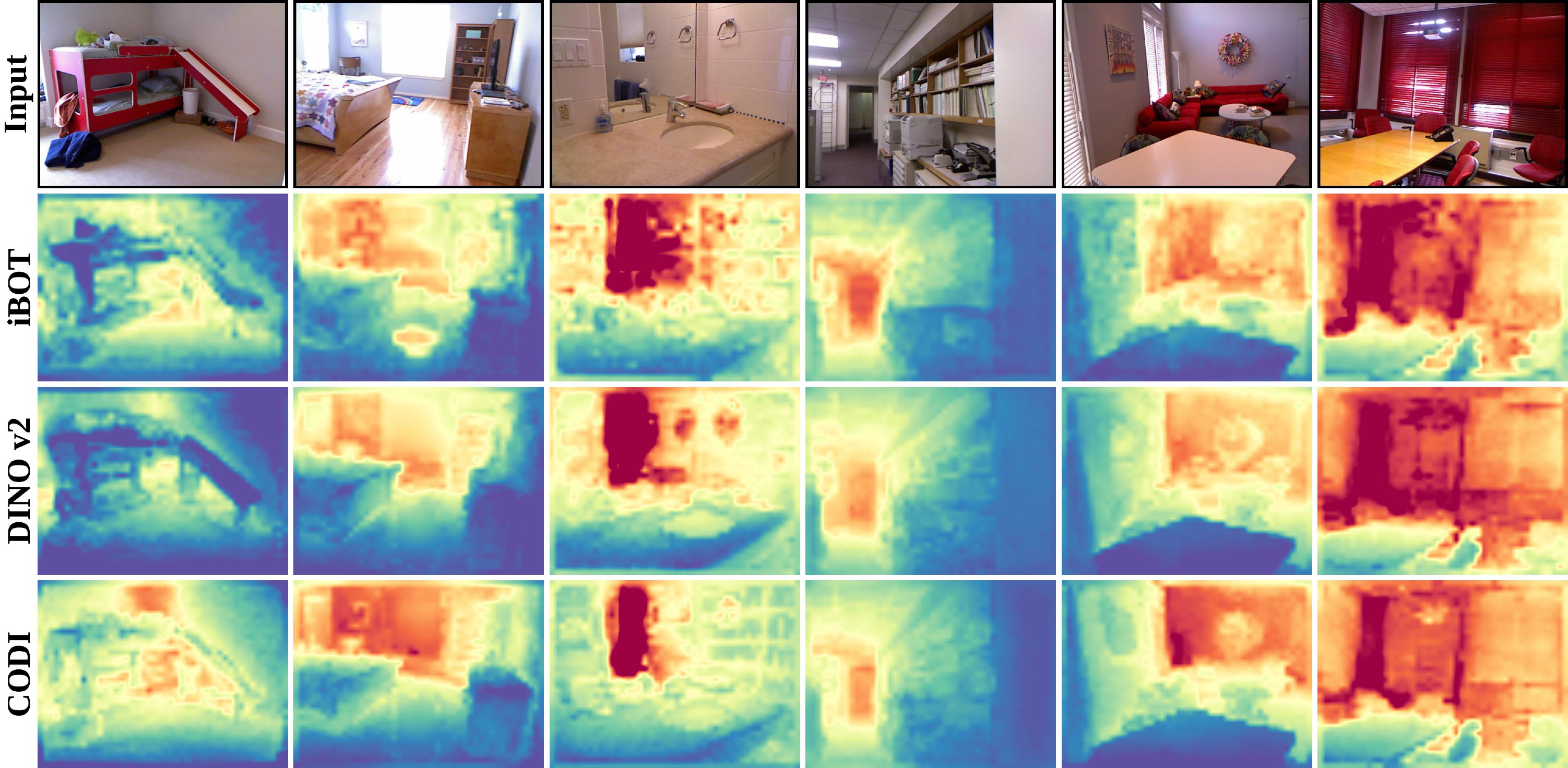}
    \caption{Additional depth estimation visualizations from frozen-backbone linear depth probes. Rows from top to bottom show the input image, iBOT, DINOv2, and CODI, while columns show different indoor examples. Colors visualize relative predicted depth. CODI maintains coherent scene layouts and depth discontinuities while improving dense representation quality.}
    \label{fig:app_depth_grid}
    
    \vspace{1.0em}
    \includegraphics[width=\linewidth]{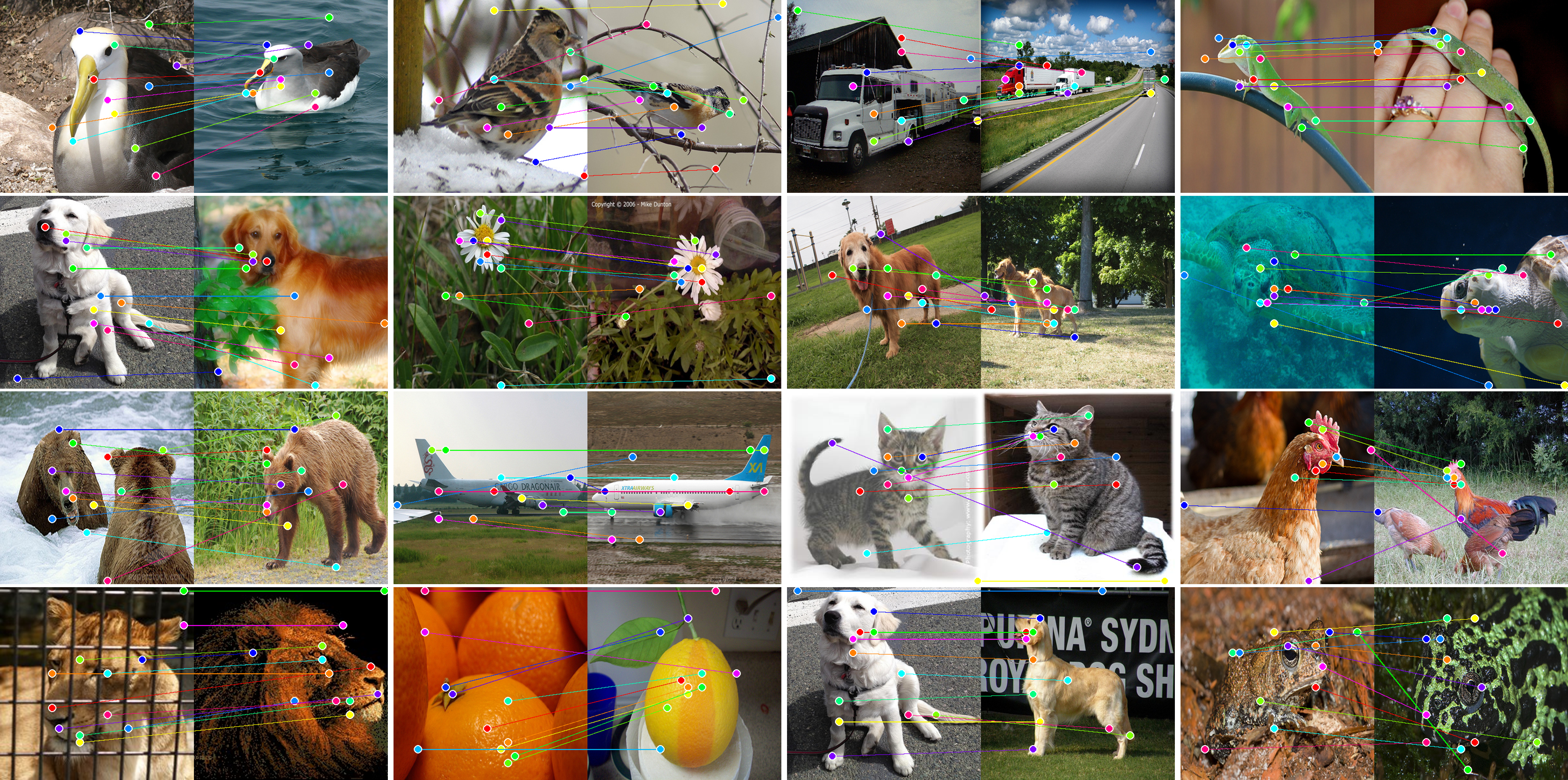}
    \caption{Additional sparse-correspondence visualizations. Colored points and lines indicate matched local features between image pairs. CODI produces many semantically aligned matches across different instances, viewpoints, scales, and backgrounds, suggesting that its dense features generalize beyond the specific image context.}
    \label{fig:app_correspondence}
\end{figure}

\section{Limitations}
\label{app:limitations}

CODI is motivated by ImageNet-style pretraining views where a dominant object often exists. In highly multi-object scenes, the \texttt{[CLS]} target can mix semantics from multiple objects, so overly strong class-token mixing may confuse patch targets. This is consistent with our ablation showing that large mixing ratios can reduce patch discriminability. Additional experiments on COCO and ImageNet-22K support \(\alpha=0.1\) as a strong starting point, but not a universal optimum or evidence of a uniformly wide operating range (Appendix~\ref{app:alpha_sensitivity}). ImageNet-22K pretraining uses only 100K iterations; full-scale training on substantially larger corpora and adaptive or multi-object-aware mixing remain future work.

\clearpage
\newpage
\section*{NeurIPS Paper Checklist}

\begin{enumerate}

\item {\bf Claims}
    \item[] Question: Do the main claims made in the abstract and introduction accurately reflect the paper's contributions and scope?
    \item[] Answer: \answerYes{}
    \item[] Justification: The abstract and Introduction state the representative-patch failure mode, the CODI target-mixing method, and the empirical scope of dense and image-level evaluations. The claims are supported by the theory in Sec.~\ref{sec:patch-starvation} and Sec.~\ref{sec:method} and by the experiments in Sec.~\ref{subsec:exp_overall}.
    \item[] Guidelines:
    \begin{itemize}
        \item The answer \answerNA{} means that the abstract and introduction do not include the claims made in the paper.
        \item The abstract and/or introduction should clearly state the claims made, including the contributions made in the paper and important assumptions and limitations. A \answerNo{} or \answerNA{} answer to this question will not be perceived well by the reviewers. 
        \item The claims made should match theoretical and experimental results, and reflect how much the results can be expected to generalize to other settings. 
        \item It is fine to include aspirational goals as motivation as long as it is clear that these goals are not attained by the paper. 
    \end{itemize}

\item {\bf Limitations}
    \item[] Question: Does the paper discuss the limitations of the work performed by the authors?
    \item[] Answer: \answerYes{}
    \item[] Justification: The paper includes a dedicated Limitations section in Appendix~\ref{app:limitations}. It discusses the possible confusion caused by strong class-token mixing on multi-object images, sensitivity to the mixing ratio, and the limited scope of large-scale pretraining experiments.
    \item[] Guidelines:
    \begin{itemize}
        \item The answer \answerNA{} means that the paper has no limitation while the answer \answerNo{} means that the paper has limitations, but those are not discussed in the paper. 
        \item The authors are encouraged to create a separate ``Limitations'' section in their paper.
        \item The paper should point out any strong assumptions and how robust the results are to violations of these assumptions (e.g., independence assumptions, noiseless settings, model well-specification, asymptotic approximations only holding locally). The authors should reflect on how these assumptions might be violated in practice and what the implications would be.
        \item The authors should reflect on the scope of the claims made, e.g., if the approach was only tested on a few datasets or with a few runs. In general, empirical results often depend on implicit assumptions, which should be articulated.
        \item The authors should reflect on the factors that influence the performance of the approach. For example, a facial recognition algorithm may perform poorly when image resolution is low or images are taken in low lighting. Or a speech-to-text system might not be used reliably to provide closed captions for online lectures because it fails to handle technical jargon.
        \item The authors should discuss the computational efficiency of the proposed algorithms and how they scale with dataset size.
        \item If applicable, the authors should discuss possible limitations of their approach to address problems of privacy and fairness.
        \item While the authors might fear that complete honesty about limitations might be used by reviewers as grounds for rejection, a worse outcome might be that reviewers discover limitations that aren't acknowledged in the paper. The authors should use their best judgment and recognize that individual actions in favor of transparency play an important role in developing norms that preserve the integrity of the community. Reviewers will be specifically instructed to not penalize honesty concerning limitations.
    \end{itemize}

\item {\bf Theory assumptions and proofs}
    \item[] Question: For each theoretical result, does the paper provide the full set of assumptions and a complete (and correct) proof?
    \item[] Answer: \answerYes{}
    \item[] Justification: The theoretical results and assumptions are stated in Sec.~\ref{subsec:theory} and Sec.~\ref{sec:theory}. Full formal statements and proofs are provided in Appendix~\ref{app:theory}.
    \item[] Guidelines:
    \begin{itemize}
        \item The answer \answerNA{} means that the paper does not include theoretical results. 
        \item All the theorems, formulas, and proofs in the paper should be numbered and cross-referenced.
        \item All assumptions should be clearly stated or referenced in the statement of any theorems.
        \item The proofs can either appear in the main paper or the supplemental material, but if they appear in the supplemental material, the authors are encouraged to provide a short proof sketch to provide intuition. 
        \item Inversely, any informal proof provided in the core of the paper should be complemented by formal proofs provided in appendix or supplemental material.
        \item Theorems and Lemmas that the proof relies upon should be properly referenced. 
    \end{itemize}

    \item {\bf Experimental result reproducibility}
    \item[] Question: Does the paper fully disclose all the information needed to reproduce the main experimental results of the paper to the extent that it affects the main claims and/or conclusions of the paper (regardless of whether the code and data are provided or not)?
    \item[] Answer: \answerYes{}
    \item[] Justification: The method is specified in Sec.~\ref{sec:method} and Algorithm~\ref{alg:codi}, while pretraining and evaluation settings are described in Sec.~\ref{subsec:exp_setting} and Appendix~\ref{app:eval_setting}. These details include datasets, model variants, optimization settings, augmentation, probe protocols, and hyperparameter sweeps needed to reproduce the reported claims.
    \item[] Guidelines:
    \begin{itemize}
        \item The answer \answerNA{} means that the paper does not include experiments.
        \item If the paper includes experiments, a \answerNo{} answer to this question will not be perceived well by the reviewers: Making the paper reproducible is important, regardless of whether the code and data are provided or not.
        \item If the contribution is a dataset and\slash or model, the authors should describe the steps taken to make their results reproducible or verifiable. 
        \item Depending on the contribution, reproducibility can be accomplished in various ways. For example, if the contribution is a novel architecture, describing the architecture fully might suffice, or if the contribution is a specific model and empirical evaluation, it may be necessary to either make it possible for others to replicate the model with the same dataset, or provide access to the model. In general, releasing code and data is often one good way to accomplish this, but reproducibility can also be provided via detailed instructions for how to replicate the results, access to a hosted model (e.g., in the case of a large language model), releasing of a model checkpoint, or other means that are appropriate to the research performed.
        \item While NeurIPS does not require releasing code, the conference does require all submissions to provide some reasonable avenue for reproducibility, which may depend on the nature of the contribution. For example
        \begin{enumerate}
            \item If the contribution is primarily a new algorithm, the paper should make it clear how to reproduce that algorithm.
            \item If the contribution is primarily a new model architecture, the paper should describe the architecture clearly and fully.
            \item If the contribution is a new model (e.g., a large language model), then there should either be a way to access this model for reproducing the results or a way to reproduce the model (e.g., with an open-source dataset or instructions for how to construct the dataset).
            \item We recognize that reproducibility may be tricky in some cases, in which case authors are welcome to describe the particular way they provide for reproducibility. In the case of closed-source models, it may be that access to the model is limited in some way (e.g., to registered users), but it should be possible for other researchers to have some path to reproducing or verifying the results.
        \end{enumerate}
    \end{itemize}

\item {\bf Open access to data and code}
    \item[] Question: Does the paper provide open access to the data and code, with sufficient instructions to faithfully reproduce the main experimental results, as described in supplemental material?
    \item[] Answer: \answerYes{}
    \item[] Justification: The code and instructions for reproducing the main experiments are available at \url{https://github.com/EldercatSAM/CODI}. The experiments use public datasets, with pretraining and evaluation settings detailed in Appendix~\ref{app:pretrain_setting} and Appendix~\ref{app:eval_setting}.
    \item[] Guidelines:
    \begin{itemize}
        \item The answer \answerNA{} means that paper does not include experiments requiring code.
        \item Please see the NeurIPS code and data submission guidelines (\url{https://neurips.cc/public/guides/CodeSubmissionPolicy}) for more details.
        \item While we encourage the release of code and data, we understand that this might not be possible, so \answerNo{} is an acceptable answer. Papers cannot be rejected simply for not including code, unless this is central to the contribution (e.g., for a new open-source benchmark).
        \item The instructions should contain the exact command and environment needed to run to reproduce the results. See the NeurIPS code and data submission guidelines (\url{https://neurips.cc/public/guides/CodeSubmissionPolicy}) for more details.
        \item The authors should provide instructions on data access and preparation, including how to access the raw data, preprocessed data, intermediate data, and generated data, etc.
        \item The authors should provide scripts to reproduce all experimental results for the new proposed method and baselines. If only a subset of experiments are reproducible, they should state which ones are omitted from the script and why.
        \item At submission time, to preserve anonymity, the authors should release anonymized versions (if applicable).
        \item Providing as much information as possible in supplemental material (appended to the paper) is recommended, but including URLs to data and code is permitted.
    \end{itemize}

\item {\bf Experimental setting/details}
    \item[] Question: Does the paper specify all the training and test details (e.g., data splits, hyperparameters, how they were chosen, type of optimizer) necessary to understand the results?
    \item[] Answer: \answerYes{}
    \item[] Justification: Sec.~\ref{subsec:exp_setting} specifies the main pretraining setup, including dataset, optimizer, batch size, model scales, epoch counts, learning-rate schedule, and CODI mixing ratio. Appendix~\ref{app:pretrain_setting} and Appendix~\ref{app:eval_setting} provide detailed pretraining and downstream evaluation protocols.
    \item[] Guidelines:
    \begin{itemize}
        \item The answer \answerNA{} means that the paper does not include experiments.
        \item The experimental setting should be presented in the core of the paper to a level of detail that is necessary to appreciate the results and make sense of them.
        \item The full details can be provided either with the code, in appendix, or as supplemental material.
    \end{itemize}

\item {\bf Experiment statistical significance}
    \item[] Question: Does the paper report error bars suitably and correctly defined or other appropriate information about the statistical significance of the experiments?
    \item[] Answer: \answerNo{}
    \item[] Justification: The paper reports single-run benchmark results and hyperparameter sweeps, but it does not report error bars, confidence intervals, or statistical significance tests. This is mainly due to the high computational cost of repeated large-scale self-supervised pretraining and downstream evaluation.
    \item[] Guidelines:
    \begin{itemize}
        \item The answer \answerNA{} means that the paper does not include experiments.
        \item The authors should answer \answerYes{} if the results are accompanied by error bars, confidence intervals, or statistical significance tests, at least for the experiments that support the main claims of the paper.
        \item The factors of variability that the error bars are capturing should be clearly stated (for example, train/test split, initialization, random drawing of some parameter, or overall run with given experimental conditions).
        \item The method for calculating the error bars should be explained (closed form formula, call to a library function, bootstrap, etc.)
        \item The assumptions made should be given (e.g., Normally distributed errors).
        \item It should be clear whether the error bar is the standard deviation or the standard error of the mean.
        \item It is OK to report 1-sigma error bars, but one should state it. The authors should preferably report a 2-sigma error bar than state that they have a 96\% CI, if the hypothesis of Normality of errors is not verified.
        \item For asymmetric distributions, the authors should be careful not to show in tables or figures symmetric error bars that would yield results that are out of range (e.g., negative error rates).
        \item If error bars are reported in tables or plots, the authors should explain in the text how they were calculated and reference the corresponding figures or tables in the text.
    \end{itemize}

\item {\bf Experiments compute resources}
    \item[] Question: For each experiment, does the paper provide sufficient information on the computer resources (type of compute workers, memory, time of execution) needed to reproduce the experiments?
    \item[] Answer: \answerNo{}
    \item[] Justification: Appendix~\ref{app:pretrain_setting} reports GPU types and counts and the approximate wall-clock time for the ViT-S/14 800-epoch pretraining run. Memory requirements and execution times for all pretraining and downstream evaluation runs are not fully reported.
    \item[] Guidelines:
    \begin{itemize}
        \item The answer \answerNA{} means that the paper does not include experiments.
        \item The paper should indicate the type of compute workers CPU or GPU, internal cluster, or cloud provider, including relevant memory and storage.
        \item The paper should provide the amount of compute required for each of the individual experimental runs as well as estimate the total compute. 
        \item The paper should disclose whether the full research project required more compute than the experiments reported in the paper (e.g., preliminary or failed experiments that didn't make it into the paper). 
    \end{itemize}
    
\item {\bf Code of ethics}
    \item[] Question: Does the research conducted in the paper conform, in every respect, with the NeurIPS Code of Ethics \url{https://neurips.cc/public/EthicsGuidelines}?
    \item[] Answer: \answerYes{}
    \item[] Justification: The work uses established public vision datasets and standard self-supervised pretraining/evaluation protocols, and it does not involve human subjects, private data collection, or deployment on users. The study is intended as foundational representation-learning research.
    \item[] Guidelines:
    \begin{itemize}
        \item The answer \answerNA{} means that the authors have not reviewed the NeurIPS Code of Ethics.
        \item If the authors answer \answerNo, they should explain the special circumstances that require a deviation from the Code of Ethics.
        \item The authors should make sure to preserve anonymity (e.g., if there is a special consideration due to laws or regulations in their jurisdiction).
    \end{itemize}

\item {\bf Broader impacts}
    \item[] Question: Does the paper discuss both potential positive societal impacts and negative societal impacts of the work performed?
    \item[] Answer: \answerNo{}
    \item[] Justification: The paper is foundational representation-learning work and does not include a separate broader-impact discussion.
    \item[] Guidelines:
    \begin{itemize}
        \item The answer \answerNA{} means that there is no societal impact of the work performed.
        \item If the authors answer \answerNA{} or \answerNo, they should explain why their work has no societal impact or why the paper does not address societal impact.
        \item Examples of negative societal impacts include potential malicious or unintended uses (e.g., disinformation, generating fake profiles, surveillance), fairness considerations (e.g., deployment of technologies that could make decisions that unfairly impact specific groups), privacy considerations, and security considerations.
        \item The conference expects that many papers will be foundational research and not tied to particular applications, let alone deployments. However, if there is a direct path to any negative applications, the authors should point it out. For example, it is legitimate to point out that an improvement in the quality of generative models could be used to generate Deepfakes for disinformation. On the other hand, it is not needed to point out that a generic algorithm for optimizing neural networks could enable people to train models that generate Deepfakes faster.
        \item The authors should consider possible harms that could arise when the technology is being used as intended and functioning correctly, harms that could arise when the technology is being used as intended but gives incorrect results, and harms following from (intentional or unintentional) misuse of the technology.
        \item If there are negative societal impacts, the authors could also discuss possible mitigation strategies (e.g., gated release of models, providing defenses in addition to attacks, mechanisms for monitoring misuse, mechanisms to monitor how a system learns from feedback over time, improving the efficiency and accessibility of ML).
    \end{itemize}
    
\item {\bf Safeguards}
    \item[] Question: Does the paper describe safeguards that have been put in place for responsible release of data or models that have a high risk for misuse (e.g., pre-trained language models, image generators, or scraped datasets)?
    \item[] Answer: \answerNA{}
    \item[] Justification: The paper does not release high-risk assets such as language models, image generators, or scraped datasets. The proposed method is a training objective for self-supervised vision backbones evaluated on standard benchmarks.
    \item[] Guidelines:
    \begin{itemize}
        \item The answer \answerNA{} means that the paper poses no such risks.
        \item Released models that have a high risk for misuse or dual-use should be released with necessary safeguards to allow for controlled use of the model, for example by requiring that users adhere to usage guidelines or restrictions to access the model or implementing safety filters. 
        \item Datasets that have been scraped from the Internet could pose safety risks. The authors should describe how they avoided releasing unsafe images.
        \item We recognize that providing effective safeguards is challenging, and many papers do not require this, but we encourage authors to take this into account and make a best faith effort.
    \end{itemize}

\item {\bf Licenses for existing assets}
    \item[] Question: Are the creators or original owners of assets (e.g., code, data, models), used in the paper, properly credited and are the license and terms of use explicitly mentioned and properly respected?
    \item[] Answer: \answerNo{}
    \item[] Justification: The datasets, baselines, and model components are credited in Sec.~\ref{subsec:exp_overall} and Appendix~\ref{app:exps}. However, the paper does not explicitly enumerate the licenses and terms of use for all existing assets.
    \item[] Guidelines:
    \begin{itemize}
        \item The answer \answerNA{} means that the paper does not use existing assets.
        \item The authors should cite the original paper that produced the code package or dataset.
        \item The authors should state which version of the asset is used and, if possible, include a URL.
        \item The name of the license (e.g., CC-BY 4.0) should be included for each asset.
        \item For scraped data from a particular source (e.g., website), the copyright and terms of service of that source should be provided.
        \item If assets are released, the license, copyright information, and terms of use in the package should be provided. For popular datasets, \url{paperswithcode.com/datasets} has curated licenses for some datasets. Their licensing guide can help determine the license of a dataset.
        \item For existing datasets that are re-packaged, both the original license and the license of the derived asset (if it has changed) should be provided.
        \item If this information is not available online, the authors are encouraged to reach out to the asset's creators.
    \end{itemize}

\item {\bf New assets}
    \item[] Question: Are new assets introduced in the paper well documented and is the documentation provided alongside the assets?
    \item[] Answer: \answerYes{}
    \item[] Justification: The CODI implementation and accompanying usage and reproduction documentation are provided in the project repository linked in the abstract.
    \item[] Guidelines:
    \begin{itemize}
        \item The answer \answerNA{} means that the paper does not release new assets.
        \item Researchers should communicate the details of the dataset\slash code\slash model as part of their submissions via structured templates. This includes details about training, license, limitations, etc. 
        \item The paper should discuss whether and how consent was obtained from people whose asset is used.
        \item At submission time, remember to anonymize your assets (if applicable). You can either create an anonymized URL or include an anonymized zip file.
    \end{itemize}

\item {\bf Crowdsourcing and research with human subjects}
    \item[] Question: For crowdsourcing experiments and research with human subjects, does the paper include the full text of instructions given to participants and screenshots, if applicable, as well as details about compensation (if any)? 
    \item[] Answer: \answerNA{}
    \item[] Justification: The paper does not involve crowdsourcing, user studies, or other research with human subjects. All experiments are conducted on existing vision benchmarks.
    \item[] Guidelines:
    \begin{itemize}
        \item The answer \answerNA{} means that the paper does not involve crowdsourcing nor research with human subjects.
        \item Including this information in the supplemental material is fine, but if the main contribution of the paper involves human subjects, then as much detail as possible should be included in the main paper. 
        \item According to the NeurIPS Code of Ethics, workers involved in data collection, curation, or other labor should be paid at least the minimum wage in the country of the data collector. 
    \end{itemize}

\item {\bf Institutional review board (IRB) approvals or equivalent for research with human subjects}
    \item[] Question: Does the paper describe potential risks incurred by study participants, whether such risks were disclosed to the subjects, and whether Institutional Review Board (IRB) approvals (or an equivalent approval/review based on the requirements of your country or institution) were obtained?
    \item[] Answer: \answerNA{}
    \item[] Justification: The paper does not involve crowdsourcing, user studies, or other research with human subjects. Therefore, IRB approval or an equivalent human-subjects review is not applicable.
    \item[] Guidelines:
    \begin{itemize}
        \item The answer \answerNA{} means that the paper does not involve crowdsourcing nor research with human subjects.
        \item Depending on the country in which research is conducted, IRB approval (or equivalent) may be required for any human subjects research. If you obtained IRB approval, you should clearly state this in the paper. 
        \item We recognize that the procedures for this may vary significantly between institutions and locations, and we expect authors to adhere to the NeurIPS Code of Ethics and the guidelines for their institution. 
        \item For initial submissions, do not include any information that would break anonymity (if applicable), such as the institution conducting the review.
    \end{itemize}

\item {\bf Declaration of LLM usage}
    \item[] Question: Does the paper describe the usage of LLMs if it is an important, original, or non-standard component of the core methods in this research? Note that if the LLM is used only for writing, editing, or formatting purposes and does \emph{not} impact the core methodology, scientific rigor, or originality of the research, declaration is not required.
    \item[] Answer: \answerNA{}
    \item[] Justification: LLMs were used only for writing, editing, and formatting, and are not part of the core research methodology.
    \item[] Guidelines:
    \begin{itemize}
        \item The answer \answerNA{} means that the core method development in this research does not involve LLMs as any important, original, or non-standard components.
        \item Please refer to our LLM policy in the NeurIPS handbook for what should or should not be described.
    \end{itemize}

\end{enumerate}

\end{document}